\documentclass[preprint,12pt]{elsarticle} % authoryear

\usepackage{hyperref}
\usepackage{url}
\usepackage{subcaption}
\usepackage{amsthm,amsmath,amssymb}
\usepackage{mathrsfs}
\usepackage{booktabs}
\usepackage{comment}
\usepackage{apxproof}
\usepackage{thmtools, thm-restate}
\usepackage{wrapfig}
\usepackage{adjustbox}
\usepackage{graphicx}
\usepackage{dashrule}
\usepackage{algorithmic}
\usepackage{float}
\usepackage{ulem}
\usepackage{natbib}
\usepackage{microtype}
\usepackage{booktabs} % for professional tables
\usepackage{cancel}
\usepackage[figuresright]{rotating}

\usepackage{array}
\usepackage{extarrows}
\usepackage{caption}
\usepackage{sidecap}
\usepackage{framed}
\usepackage{tabularx}
\usepackage{twoopt}
\usepackage{lscape}
\usepackage[table]{xcolor}
\usepackage{algorithm}
\usepackage{textcomp}
\usepackage{stfloats}
\usepackage{verbatim}

\usepackage{enumitem}

\captionsetup[table]{labelformat=empty}

\usepackage{tikz}
\usetikzlibrary{fit}

\usepackage{lineno}

\newcommandtwoopt\Textbox[5][7.2cm][2cm]{%
\begin{tikzpicture}[remember picture,overlay]
  \coordinate (aux) at ([xshift=#1]#4);
  \node[inner ysep=3pt,yshift=1ex,draw=pink,thick,
    fit=(#3) (aux),baseline] 
    (box) {};
  \node[text width=#2,anchor=north east,
    font=\sffamily\footnotesize,
  align=right
    ] 
    at (box.north east) {#5};
\end{tikzpicture}%
}

% %  colors
\hypersetup{
colorlinks=true,
citecolor=red,
    linkcolor=red,
    filecolor=magenta,      
    urlcolor=red,
linktocpage}

% \hypersetup{hidelinks} % control colour

%%%%%%%%end from Long Yang%%%%%
\usepackage{lipsum}
\makeatletter

\makeatother

% \journal{AI}

% \journal{Artificial Intelligence}

\title{Safe Multi-Agent Reinforcement Learning with
Bilevel Optimization in Autonomous Driving}

% % Authors and affiliations
\author{Zhi Zheng$^{*}$, Shangding Gu$^{*}$
}

\cortext[cor1]{Technical University of Munich,  Munich, 85748, Germany. We appreciate any constructive
comments and suggestions corresponding to \textit{zhi.zheng1997@gmail.com} and \textit{shangding.gu@tum.de}. } 

% \end{frontmatter}
\begin{document}

\begin{frontmatter}

\begin{abstract}
Ensuring safety in MARL, particularly when deploying it in real-world applications such as autonomous driving, emerges as a critical challenge. To address this challenge, traditional safe MARL methods extend MARL approaches to incorporate safety considerations, aiming to minimize safety risk values. However, these safe MARL algorithms often fail to model other agents and lack convergence guarantees, particularly in dynamically complex environments. In this study, we propose a safe MARL method grounded in a Stackelberg model with bi-level optimization, for which convergence analysis is provided. Derived from our theoretical analysis, we develop two practical algorithms, namely Constrained Stackelberg Q-learning (CSQ) and Constrained Stackelberg Multi-Agent Deep Deterministic Policy Gradient (CS-MADDPG), designed to facilitate MARL decision-making in autonomous driving applications. To evaluate the effectiveness of our algorithms, we developed a safe MARL autonomous driving benchmark and conducted experiments on challenging autonomous driving scenarios, such as merges, roundabouts, intersections, and racetracks. The experimental results indicate that our algorithms, CSQ and CS-MADDPG, outperform several strong MARL baselines, such as Bi-AC, MACPO, and MAPPO-L, regarding reward and safety performance. The demos and source code are available at the link\footnote{\url{https://github.com/SafeRL-Lab/Safe-MARL-in-Autonomous-Driving.git}}.
\end{abstract}

\begin{keyword}
  Safe Reinforcement Learning, Stackelberg Model, Autonomous Driving, Safe Multi-Agent Systems.
\end{keyword}

\end{frontmatter}

\section{Introduction}
\label{sec:introduction}

Multi-Agent Reinforcement Learning (MARL) has shown remarkable success in many domains~\cite{gu2022review, zhang2021multi}. However, many MARL methods do not take safety into account~\cite{gu2022review}. In real-world scenarios, prioritizing safety is of paramount importance \cite{gu2023human, JMLR:v16:garcia15a}. For instance, in autonomous driving, a domain characterized by the interaction of multiple vehicles within a shared environment, where safety emerges as a critical concern~\cite{gu2022constrained}. Neglecting safety considerations can lead to hazardous situations with potentially catastrophic consequences.

To address this challenging problem, the field of safe MARL has emerged~\cite{cai2021safe, elsayed2021safe, gu2024safe, GU2023103905, liu2021cmix, lu2021decentralized}, focusing on the integration of safety into Reinforcement Learning (RL) algorithms. By explicitly prioritizing safety, these approaches aim to strike a delicate balance between optimizing performance and ensuring safe actions within multi-agent systems. For instance, Safe Dec-PG~\cite{lu2021decentralized} introduces a decentralized policy optimization method to ensure the safety of MARL. However, this method involves sharing policy parameters, which could potentially lead to suboptimal policy optimization~\cite{kuba2021settling}. Meanwhile, the MACPO and MAPPO-L~\cite{GU2023103905} methods utilize constrained multi-agent policy optimization and Lagrangian optimization to guarantee reward monotonic improvement while ensuring learning safety. Nonetheless, in these safe MARL algorithms, an agent often lacks effective modeling of other agents. Particularly in complex environments, the actions and policies of other agents can evolve over time, leading to non-stationarity. This evolving landscape poses significant challenges for learning optimal policies and effectively coordinating actions.

In this study, we proposed a safe MARL method with theoretical convergence analysis in a Stackelberg game setting~\cite{sherali1984multiple, zheng2022stackelberg}. The Stackelberg model provides a framework for hierarchical decision-making,  wherein agents assume distinct leader and follower roles. Specifically, the Stackelberg leader, functioning as the primary agent, has the capacity to both model and exert influence over the actions of the Stackelberg followers. This structure endows the leader with the capability to optimize its objectives while considering the responses elicited from the followers. By leveraging the Stackelberg model, agents can demonstrate more sophisticated strategies, improving coordination and enhancing overall performance~\cite{biac, huang2022robust, bimarkov, naveiro2019gradient}. These advancements have ignited new possibilities for MARL applications in real-world environments.

Based on the Stackelberg framework~\cite{sherali1984multiple, zheng2022stackelberg}, we leverage bi-level optimization to enhance the reward and safety performance of individual agents and provide a comprehensive convergence analysis. Building upon this theoretical foundation, we have developed two practical algorithms for multi-agent reinforcement learning (MARL) decision-making applications in autonomous driving: Constrained Stackelberg Q-learning (CSQ) for discrete action environments and Constrained Stackelberg Multi-Agent Deep Deterministic Policy Gradients (CS-MADDPG) for continuous action environments.

This study presents a four-fold contribution. First, we propose a safe MARL method wherein agents model each other, significantly enhancing cooperative performance while ensuring safety. Second, we provide a theoretical convergence analysis, demonstrating that agent rewards can converge to an optimal point while maintaining safety. Third, based on our theoretical analysis, we develop two algorithms and conduct experiments in autonomous driving. The experimental results clearly indicate that our algorithms significantly improve reward performance while ensuring agent safety in complex and dynamic autonomous driving environments. These results also showcase superior performance compared to other safe MARL baselines. Fourth, we develop safe MARL autonomous driving benchmarks to evaluate the effectiveness of safe MARL algorithms in autonomous driving applications. These benchmarks offer impartial evaluations for all safe MARL baselines, enhancing the reliability of our findings.

\section{Related Work}
\label{sec:related-work}
Safe RL often involves the application of Constrained Markov Decision Process (CMDP) frameworks \cite{altman-constrainedMDP}. CMDP is an extension of the standard Markov Decision Process (MDP) \cite{Uther2010}, incorporating constraints into the decision-making process. Numerous methodologies are proposed to address CMDP problems. For example, one notable approach is Reward Constrained Policy Optimization (RCPO), detailed in \cite{DBLP:journals/corr/abs-1805-11074}. RCPO employs the Lagrange relaxation technique to transform CMDPs into equivalent unconstrained problems, allowing for more effective optimization. Another well-established technique is Constrained Policy Optimization (CPO) \cite{CPO}, which builds upon the Trust Region method \cite{schulman2015trust}. In CPO, the agent's objective and constraint functions are treated as surrogate functions, simplifying the estimation process. It's worth noting that these algorithms have primarily been tailored for single-agent scenarios, with limited exploration into the realm of safe multi-agent learning.

In the domain of MARL, tackling safety concerns presents a notably more complex challenge than single-agent RL~\cite{gu2022review}. Each agent must navigate a delicate balance between adhering to its safety constraints and pursuing reward maximization, a juxtaposition that often leads to conflicting imperatives. Furthermore, each agent must integrate the safety constraints of other agents into its decision-making process. This consideration is vital to ensure that the actions of all agents within the system align with established safety bounds, thereby guaranteeing comprehensive, safe behaviors across the entire MARL system.

To enhance safety in MARL, two primary approaches, centralized shielding and factored shielding, have been developed by extending the concepts of single-agent safe RL shielding techniques~\cite{elsayed2021safe}. These strategies effectively adjust joint actions to mitigate unsafe behaviors. Nevertheless, their method necessitates external information for corrective measures, which might not always be feasible in practical scenarios. Melcer et al.~\cite{melcer2022shield} introduced a decentralized shield technique for MARL, employing a designated shield agent to manage and correct potentially unsafe actions of other agents within the system. Despite outperforming baselines in their study, this method encounters challenges similar to those identified in the work of Elsayed et al.~\cite{elsayed2021safe}.  Ding et al.~\cite{ding2023provably} proposed an approach to safe MARL within a zero-sum game framework, utilizing Lagrangian optimization to maintain system safety. While a rigorous theoretical framework underpins their method, it lacks empirical validation and may be ineffective for opponent modeling in bi-level scenarios. Zhang et al.~\cite{zhang2023spatial} offered a safety verification technique for MARL employing control barrier functions, demonstrating superior performance over baseline methods. However, this approach does not provide a convergence analysis and depends on external knowledge to define safe actions.

A policy optimization strategy incorporating peak and average constraints~\cite{liu2021cmix} is introduced, utilizing QMIX techniques~\cite{rashid2020monotonic} within a centralized training and decentralized execution paradigm to ensure MARL safety. This method surpasses traditional baselines, such as VDN~\cite{sunehag2018value} and IQL~\cite{tan1993multi}, in safety and reward performance within blocker game and vehicular network routing challenges. Nonetheless, it lacks theoretical guarantees and may only achieve suboptimality due to shared policy parameters~\cite{kuba2021settling}.  Lu et al.~\cite{lu2021decentralized} introduced a decentralized policy optimization technique for safe MARL. However, this approach might also lead to suboptimal policy outcomes due to the use of shared policy parameters~\cite{kuba2021settling}, highlighting a critical area for further investigation and development in the pursuit of optimal safety strategies within MARL environments.

Furthermore, two safety-aware model-free MARL algorithms~\cite{GU2023103905}, MACPO and MAPPO-Lagrangian, are introduced based on the adaptations of Trust Region and Lagrangian techniques to the multi-agent domain. These advancements guarantee performance enhancement and safety, marking a significant stride towards comprehensive safety improvement, and their experiment results demonstrate their algorithms outperform several state-of-the-art baselines such as HAPPO~\cite{kuba2022trust} and MAPPO~\cite{yu2022surprising}. Moreover, \cite{gu2024safe} proposes a safe MARL method for real-world robot control, where a soft policy optimization technique is leveraged to address the learning oscillations. Nevertheless, these algorithms often overlook inter-agent communication, with each agent primarily focused on optimizing its individual objective function. Moreover, the above methods lack convergence analysis. In this study, we ensure each agent's reward and safety performance with theoretical guarantees, provide convergence analysis, and develop safe MARL autonomous driving benchmarks for safe MARL communities.

\section{Preliminaries}
\label{sec:Preliminaries}
\subsection{Stackelberg model and bi-level reinforcement learning}
\subsubsection{Stackelberg model}
The Stackelberg model, originally proposed by Heinrich von Stackelberg~\cite{stackelberg}, stands as a pivotal concept in game theory, frequently employed to represent duopolistic interactions in the business domain. Unlike the simultaneous action selection in other models, the Stackelberg model embraces a sequential decision framework.

Within this model, two key players are the leader and the follower. The leader is granted the privilege of making the initial move by committing to an action. Subsequently, the follower devises an action in response to the leader's initial choice. An essential foundation of the model is the presumption that each player possesses knowledge or predictive capabilities, enabling them to anticipate their counterpart's response.

For example, the leader strategically engineers their initial move to maximize their payoff, carefully considering the follower's expected reaction. Significantly, the decision cannot be changed or reversed once the leader commits to their decision. Consequently, the follower's choices are limited to formulating their response to the leader's action to optimize their payoff. Once both players have finalized their strategies, their actions are set into motion simultaneously. The result achieved through the strategy is referred to as the Stackelberg equilibrium.

\subsubsection{Bi-level Actor Critic}
Many existing multi-agent reinforcement learning algorithms approach cooperative games from a symmetric perspective, treating each agent equally. However, this approach could result in the emergence of arbitrary Nash equilibria in Markov games, especially in scenarios with multiple equilibria. The learned behaviors of individual agents might suffer from uncertainty and suboptimality. To address this challenge, \textit{Zhang et al.}~\cite{biac} introduce an asymmetric Stackelberg model. In this model, they formulate the problem as an unconstrained Stackelberg problem\footnote{This problem is usually referred to as a bi-level reinforcement learning (BiRL) problem, which serves as an extension of the Stackelberg model into a multi-state environment~\cite{biac}.} within a two-agent reinforcement learning system. The preference for Stackelberg equilibrium over Nash equilibrium is rooted in its potential as a convergence point that offers superior Pareto optimality, particularly in cooperative settings.

In their setup, the leader agent aims to optimize its policy, trying to maximize its total return while considering the expected response of the follower agent. Simultaneously, the follower agent observes the leader's policy and selects its own optimal policy accordingly. Consequently, this reinforcement learning problem can be conceptualized as a bi-level optimization problem.
\begin{equation}
    \begin{aligned} \label{birl}
        \mathop{\max}_{\pi_1} \quad & J_1(\pi_1, \pi_2^*) = \mathbb{E}_{a_1,a_2\sim \pi_1,\pi_2^*} \sum_{t=1}^{\infty} \gamma^t r_1^t, \\
        \textrm{s.t.} \quad &  \pi_2^* = \mathop{\max}_{\pi_2} J_2(\pi_1, \pi_2) = \mathbb{E}_{a_1,a_2\sim \pi_1,\pi_2} \sum_{t=1}^{\infty} \gamma^t r_2^t.
    \end{aligned}
\end{equation}
In the equation above, $J_i$ represents the expected cumulative reward for agent $i$ under the conditions where both agents adhere to their respective policies, $\pi_1$ and $\pi_2$. 

However, the applicability of this method~\cite{biac} is limited to scenarios involving simple discrete actions and states.  Deploying their method in complex environments with discrete or continuous action spaces presents challenges due to the significant increase in computational load associated with the bi-level optimization problem during iterations and the lack of considerations for agent safety.

\subsection{Problem Formulation}
\label{sec:problem-formulation}

To enhance safety within the Stackelberg reinforcement learning framework, we integrate a set of constraints that manage the actions or policies of the agents, drawing upon the CMDP principles. These constraints are designed to align with safety criteria, resource limitations, or other domain-specific considerations. Beyond optimizing the primary objective, our framework obliges the agents to regulate one or more constraints in complex discrete and continuous environments. In our settings, the constraint is quantified as the cumulative long-term discounted costs associated with adherence to policies $\pi_1$ and $\pi_2$, as illustrated below:
\begin{equation}
\label{eq:constraints-bilevel}
    G_{i,j}(\pi_1,\pi_2) = \mathbb{E}_{a_1,a_2 \sim \pi_1, \pi_2}\sum_{t=1}^{\infty}\gamma^t c_{i,j}^t, \quad j=1,2...m.
\end{equation}
Here, $G_{i,j}$ represents the $j$-th constraint of Agent $i$, and $c_{i,j}^t$ denotes the $j$-th cost of Agent $i$ at time step t. By augmenting (\ref{birl}) with these constraints, we can formulate the constrained Stackelberg multi-agent reinforcement learning problem as follows:
\begin{equation}
    \begin{aligned} \label{constrained_birl}
        \mathop{\max}_{\pi_1} \quad & J_1(\pi_1, \pi_2^*) = \mathbb{E}_{a_1,a_2\sim \pi_1,\pi_2^*} \sum_{t=1}^{\infty} \gamma^t r_1^t,
        \\
        \textrm{s.t.} \quad & G_{1,j}(\pi_1,\pi_2^*) = \mathbb{E}_{a_1,a_2 \sim \pi_1, \pi_2^*}\sum_{t=1}^{\infty}\gamma^t c_{1,j}^t \leq d_{1,j},
        \\ 
        & \qquad \qquad \forall j=1...m_1,
        \\
        & \pi_2^* = \mathop{\max}_{\pi_2} \quad J_2(\pi_1, \pi_2) = \mathbb{E}_{a_1,a_2\sim \pi_1,\pi_2} \sum_{t=1}^{\infty} \gamma^t r_2^t,
        \\
        & \textrm{s.t.} \quad  G_{2,j}(\pi_1,\pi_2) = \mathbb{E}_{a_1,a_2 \sim \pi_1, \pi_2}\sum_{t=1}^{\infty}\gamma^t c_{2,j}^t \leq d_{2,j},
        \\
        & \qquad \qquad \qquad \forall j=1...m_2.
    \end{aligned}
\end{equation}
In our formulated problem, constraints can assume various forms, namely inequalities or equalities, depending on specific requirements. Generally, we focus on inequality constraints, where $d_{i,j}$ denotes the threshold limit for the $j$-th constraint of Agent $i$. 

\section{Method}
\label{sec:method}
\subsection{Constrained Stackelberg Q-learning}
The problem presented in (\ref{constrained_birl}) is intrinsically complex, primarily due to the intricate opponent-modeling aspects involved in constrained bi-level optimization. In response to this challenge, we introduce an approach called Constrained Stackelberg Q learning (CSQ). This approach is designed to tackle the intricacies of autonomous driving scenarios characterized by discrete action spaces and continuous state spaces, as outlined in Algorithm~\ref{algo:csq-algorithm-learning}. To simplify the notations of our method, it's important to note that the following discussion focuses on scenarios where each agent is subject to one constraint. Moreover, we analyze the convergence of our method, which is available in ~\ref{Convergence}.

\subsubsection{Value functions for reward and cost}
In our constrained Q-learning framework, the value function assumes a pivotal role, representing the expected cumulative reward attainable by an agent when following a particular policy within a specific state. However, constrained Q-learning introduces a crucial component—a cost-value function associated with actions and state transitions—beyond the reward aspect to enforce constraints. Both the reward and cost Q-functions are defined in Equations (\ref{q_value}) and (\ref{cost_value}), and they are estimated through neural networks. In this study, we utilize a centralized variant of the Q-function, which considers a global state or observation along with the actions of all agents. It is worth noting that each agent is responsible for managing its own Q-function networks.
\begin{align} \label{q_value}
    Q_i(s,a_1,a_2) = & \mathbb{E}_{s' \sim P}(r_i(s,a_1,a_2)  \nonumber \\& + \gamma\mathbb{E}_{a_1', a_2' \sim \pi_1,\pi_2}Q_i(s',a_1',a_2')),
\end{align}
\begin{align}\label{cost_value}
    G_{i}(s,a_1,a_2) = & \mathbb{E}_{s' \sim P}(c_{i}(s,a_1,a_2) \nonumber \\& + \gamma\mathbb{E}_{a_1', a_2' \sim \pi_1,\pi_2}G_{i}(s',a_1',a_2')).
\end{align}
$Q_i$ stands for  the Q-function associated with the reward parameterized by $\phi_i$ and $G_{i}$ for cost with parameter $\zeta_{i}$. $P$ denotes the state transition distribution when following $\pi_1$ and $\pi_2$. The optimal policy in this context entails adhering to the equilibrium point of the constrained Stackelberg model, which will be illustrated below.
\subsubsection{Bellman equation}
Similar to the min-max-Q and Nash-Q approaches~\cite{nash_q}, we can define the constrained Stackelberg-Bellman equations for reward and cost, as illustrated below.:
\begin{equation} \label{p10}
    \begin{aligned}
    Q_i^{t+1}(s,a_1, a_2)  = &(1-\alpha)Q_i^{t}(s,a_1, a_2)  
    \\
    &+ \alpha(r_i^t + \gamma Q_i^{t,{CS}}(s')),
    \end{aligned}
\end{equation}

\begin{equation}\label{p11}
    \begin{aligned}
    G_{i}^{t+1}(s,a_1, a_2)  = &(1-\alpha)G_{i}^{t}(s,a_1, a_2) 
    \\
    &+ \alpha(c_{i}^t + \gamma G_{i}^{t,{CS}}(s')),
    \end{aligned}
\end{equation}
\begin{equation}\label{p12}
    Q_i^{t,{CS}}(s')= \pi_1(s')\pi_2(s')Q_i^t(s'),
\end{equation}
\begin{equation}\label{p13}
   G_{i}^{t,{CS}}(s')= \pi_1(s')\pi_2(s')G_{i}^t(s').
\end{equation}
Here, $Q_i^{t,{CS}}(s')$ and $ G_i^{t,{CS}}(s')$ denote the anticipated reward and cost, respectively, for the $i$-th agent within the state denoted as $s$ in the constrained Stackelberg equilibrium. These equations guide the iterative update of the Q-functions. 
\subsubsection{Update rule for value function}
To proceed with a transition $\langle s, a_1, a_2, r_1, r_2, c_1, c_2, d, s' \rangle$, the first step entails determining the Stackelberg equilibrium for the subsequent state $s'$. This is achieved by solving the following problem by enumerating all potential action combinations.
\begin{equation}
    \begin{aligned} \label{p14}
         a'_1  \gets &\mathop{\arg\max}_{a_1 \in A_1(s')} Q_1(s', a_1, \mathop{\arg\max}_{a_2 \in A_2(s')}  Q_2(s', a_1, a_2)),
        \\
          \quad \textrm{s.t.} &\quad G_1(s', a_1, a_2) \leq d_1.
        \\
         & \quad a'_2  \gets \mathop{\arg\max}_{a_2 \in A_2(s')} Q_2(s', a'_1, a_2),
        \\
        & \quad \quad \textrm{s.t.} \quad G_2(s', a'_1, a_2) \leq d_2.
    \end{aligned}
\end{equation}
$A_i(s')$ denotes the safe action set for Agent $i$ in state $s'$, with $d_i$ signifying Agent $i$'s constraint threshold. Consequently, actions derived from solving this equation are guaranteed to adhere to safety constraints. The parameters $\phi_i$ and $\zeta_i$ are subsequently updated via gradient descent to minimize the TD error for their respective Q-functions.
\begin{subequations}
\begin{equation}
    \begin{aligned}
        & \phi_1 \gets \phi_1 - \alpha _1 \bigtriangledown_{\phi_1}(Q_1(s,a_1,a_2) - r_1
        \\
        & \qquad - \gamma (1-d)Q_1^{targ}(s',a_1',a_2')),
    \end{aligned}
\end{equation}
\begin{equation}
    \begin{aligned}
        & \phi_2 \gets \phi_2 - \alpha_2 \bigtriangledown_{\phi_2}(Q_2(s,a_1,a_2)  - r_2
        \\
        & \qquad - \gamma (1-d)Q_2^{targ}(s',a_1',a_2')),
    \end{aligned}
\end{equation}

\begin{equation}
    \begin{aligned}
        & \zeta_1 \gets \zeta_1 - \beta_1 \bigtriangledown_{\zeta_1}(G_{1}(s,a_1,a_2) - c_1
        \\
        & \qquad - \gamma (1-d)G_{1}^{targ}(s',a_1',a_2')),
    \end{aligned}
\end{equation}

\begin{equation}
    \begin{aligned}
        & \zeta_2 \gets \zeta_2 - \beta_2 \bigtriangledown_{\zeta_2}(G_{2}(s,a_1,a_2) - c_2 
        \\
        & \qquad- \gamma (1-d)G_{2}^{targ}(s',a_1',a_2')).
    \end{aligned}
\end{equation}

\end{subequations}
where $\alpha_i$ and $\beta_i$ are the learning rate.

\begin{algorithm}[tb!]
    \caption{Constrained Stackelberg Q learning (CSQ)}
    \begin{algorithmic}
        \STATE \textbf{Initialization} 
        \STATE Initialize Q-function parameters $\phi_i$, cost value function parameters $\zeta_i$ and replay buffer $D$;
        \STATE Initialize target networks parameters, and they share the same parameters as main networks $\phi_i^{targ} = \phi_i$, $\zeta_i^{targ} = \zeta_i$, $i\in \left\{1,2\right\}$; 
        \STATE Initialize hyperparameters $d_i$, $\gamma$ and $\rho$, $i\in \left\{1,2\right\}$;
        \FOR{step=1:t}
            \STATE Sample actions with $\epsilon$-greedy according to Equation (\ref{p14});
            \STATE $a^*_1 \gets \mathop{\arg\max}_{a_1} Q_1(s, a_1, \mathop{\arg\max}_{a_2}  Q_2(s, a_1, a_2))$,
            \STATE $\quad \textrm{s.t.} \quad G_1(s, a_1, a_2) \leq d_1 $.
            \STATE $a^*_2 \gets \mathop{\arg\max}_{a_2} Q_2(s, a^*_1, a_2)$,
            \STATE $\quad \textrm{s.t.} \quad G_2(s, a^*_1, a_2) \leq d_2 $.
            \STATE Observe next state $s'$, reward $r_i$, cost $c_i$, and done signal $d$ which indicates whether $s'$ is terminal;
            \STATE Store trajectory $(s,a_i,r_i,c_i,d,s')$ into replay buffer $D$;
            \STATE If $s'$ is terminal, reset environment state;
        
            \FOR{k=1:n}
            \STATE Randomly sample a batch of transitions, $B = \left\{(s, a_i,  r_i , c_i, d,  s')\right\}$ from $D$;
            \STATE Compute target actions according to Equation (\ref{p14});
            \STATE Compute target Q:
            \STATE $y_{i} = r_{i} + \gamma(1-d)Q^{targ}_{i}(s', a'_1, a'_2)$.
            \STATE Compute target cost:
            \STATE $g_{i} = c_{i} + \gamma(1-d)G^{targ}_{i}(s', a'_1, a'_2)$.
            \STATE Update critic function by gradient descent:
            \STATE $\nabla_{\phi}\frac{1}{|B|}\sum_{\left\{(s ,a ,r ,c ,s', d) \in B\right\}}(Q_{i}(s, a_1, a_2) - y_{i})^2$.
            \STATE Update critic cost function by gradient descent:
            \STATE $\nabla_{\zeta}\frac{1}{|B|}\sum_{\left\{(s ,a ,r ,c ,s', d) \in B\right\}}(G_{i}(s, a_1, a_2) - g_{i})^2$.
            \STATE Update target network:
            \STATE $\phi_i^{targ} \gets \rho\phi_i^{targ} + (1-\rho)\phi_i$.
            \STATE $\zeta_i^{targ} \gets \rho\zeta_i^{targ} + (1-\rho)\zeta_i$.
            \ENDFOR
        \ENDFOR
    \end{algorithmic}
\label{algo:csq-algorithm-learning}
\end{algorithm}

\subsection{Constrained Stackelberg MADDPG}
Constrained Stackelberg Q-learning performs admirably in scenarios characterized by discrete action spaces. However, it encounters limitations as the number of possible actions increases, potentially causing an exponential rise in computational time. To tackle this challenge, we introduce the Constrained Stackelberg Multi-Agent Deep Deterministic Policy Gradients (CS-MADDPG) algorithm, as detailed in Algorithm~\ref{algo:CS-MADDPG-algorithm}. This algorithm provides an efficient solution by adapting the MADDPG algorithm~\cite{MADDPG} to the constrained Stackelberg model.

\subsubsection{Actor, Critic and Cost Critic}
MADDPG employs an actor-critic architecture for each agent. In this framework, the critic is responsible for assessing the expected return, considering the joint actions and observations of all agents, while the actor's role is to select actions based on the agent's local observations.

To align MADDPG with the Stackelberg model, we designate one agent as the leader (Agent 1) and the other as the follower (Agent 2). This distinction establishes a hierarchical relationship between the two agents, capturing the strategic decision-making dynamics inherent in the Stackelberg model. The leader agent's actor operates independently, relying solely on its local observations to make decisions. Subsequently, the follower agent observes the leader's decision and then formulates its decision accordingly. Both agents possess deterministic policies $\mu_i$ approximated by neural network parameters $\theta_i$. The hierarchical decision-making sequence is defined as follows.
\begin{subequations}
\label{hierarchical decision-making}
    \begin{align}
        & a_1 = \mu_1(o_1|\theta_1),
        \\
        & a_2 = \mu_2(o_2,a_1|\theta_2).
    \end{align}
\end{subequations}
Here, $o_i$ represents the local observation of each agent, and the decisions are made sequentially but executed simultaneously within a decentralized architecture.

The critic setting is similar to MADDPG~\cite{MADDPG}. It refers to the centralized Q-value function, which estimates the expected accumulated reward for a particular set of joint actions within a given state. This Q-value function guides the agent in selecting actions that maximize rewards. The critic network for Agent $i$ can be represented as follows:
\begin{equation}
    Q_i(s, a_1, a_2|\phi_i).
\end{equation}
where $s$ represents the global state and $\phi_i$ is the parameter.

In addition to the reward actor and critic, the cost critic plays a crucial role in evaluating policy safety. This cost critic serves as a centralized value function, estimating the expected accumulated cost or penalty associated with a specific set of joint actions in a given state, similar to the role of the reward critic. It plays a pivotal role in guiding the agent away from actions expected to result in higher costs or penalties. In this context, the cost critic network for Agent $i$'s $j$-th constraint is denoted as follows:
\begin{equation}
    G_{i,j}(s, a_1, a_2|\zeta_{i,j}), \quad j=1...m_i.
\end{equation}

\subsubsection{Problem reformulation and Lagrangian Approach}
To provide a more straightforward presentation of the specific problems we need to address, based on the aforementioned new definition, we can rewrite the constrained bi-level optimization problem (\ref{constrained_birl}) in three steps.

First, assuming that we have already well-approximated value and cost functions, the decision-making process at each time step involves solving the following constrained bi-level problem to obtain a set of optimal joint actions.
\begin{equation}
    \begin{aligned} \label{p19}
        & a_1^* = \mathop{\arg\max}_{a_1 \in A_1(s')} Q_1(s, a_1, a_2^*),
        \\
        & \ \ \textrm{s.t.} \quad G_{1,j}(s, a_1, a_2^*) \leq d_{1,j}, \quad \forall j=1...m_1,
        \\
        &\quad \qquad a_2^* = \mathop{\arg\max}_{a_2 \in A_2(s')} Q_2(s, a_1, a_2),
        \\
        & \quad \qquad \quad \textrm{s.t.} \quad G_{2,j}(s, a_1, a_2) \leq d_{2,j}, \quad \forall j=1...m_2.
    \end{aligned}
\end{equation}
However, solving such an optimization problem becomes impractical due to the immense computational effort it demands. This is precisely why we must employ the actor, which only requires forward propagation to derive actions. To accomplish this, we must integrate the actors' parameters into (\ref{p19}) and reformulate it as an optimization problem concerning the parameter $\theta_i$, as illustrated below.
\begin{equation}
    \begin{aligned} \label{p20}
        & \theta_1^* = \mathop{\arg\max}_{\theta_1} Q_1(s, \mu_1(o_1|{\theta_1}), \mu_2(o_2, \mu_1(o_1|{\theta_1})|{\theta_2^*})),
        \\
        & \textrm{s.t.} \quad G_{1,j}(s, \mu_1(o_1|{\theta_1}), \mu_2(o_2, \mu_1(o_1|{\theta_1})|{\theta_2^*}) \leq d_{1,j},
        \\
        &  \quad \qquad \qquad \forall j=1...m_1,
        \\
        & \qquad \theta_2^* = \mathop{\arg\max}_{\theta_2} Q_2(s, \mu_1(o_1|{\theta_1}), \mu_2(o_2, \mu_1(o_1|{\theta_1})|{\theta_2}),
        \\
        & \qquad \quad \textrm{s.t.} \quad G_{2,j}(s, \mu_1(o_1|{\theta_1}), \mu_2(o_2, \mu_1(o_1|{\theta_1})|{\theta_2}) \leq d_{2,j},
        \\
        & \qquad \qquad \qquad \qquad  \forall j=1...m_2.
    \end{aligned}
\end{equation}
\\
Furthermore, we employ a Lagrangian relaxation approach to tackle the issue outlined in problem (\ref{p20}), yielding an unconstrained bi-level problem:
\begin{equation}
    \begin{aligned} \label{p21}
        &\mathop{\max}_{\theta_1}\mathop{\min}_{\lambda_1 \geq 0} L_1(\theta_1,\lambda_1) = 
        \\
        &Q_1(s, \mu_1(o_1|{\theta_1}), \mu_2(o_2, \mu_1(o_1|{\theta_1})|{\theta_2^*}))
        \\
        & -\sum_{j=0}^{m_1}\lambda_{1,j}(G_{1,j}(s, \mu_1(o_1|{\theta_1}), \mu_2(o_2, \mu_1(o_1|{\theta_1})|{\theta_2^*}) - d_{1,j}),
        \\
        & \qquad\textrm{s.t.} \mathop{\max}_{\theta_2}\mathop{\min}_{\lambda_2 \geq 0} L_2(\theta_2,\lambda_2) = 
        \\
        & \qquad Q_2(s, \mu_1(o_1|{\theta_1}), \mu_2(o_2, \mu_1(o_1|{\theta_1})|{\theta_2}))
        \\
        & \quad - \sum_{j=0}^{m_2}\lambda_{2,j}(G_{2,j}(s, \mu_1(o_1|{\theta_1}), \mu_2(o_2, \mu_1(o_1|{\theta_1})|{\theta_2}) - d_{2,j}).
    \end{aligned}
\end{equation}
where $\lambda_{i,j}$ denotes the Lagrange multipliers, implying adopting a two-timescale approach. At a faster timescale, the optimal values of $\theta$ are determined by solving the problem (\ref{p21}). Meanwhile, the Lagrange multiplier $\lambda$ gradually increases at a slower timescale until the constraints are successfully met.

\subsubsection{Actor update}
In our reinforcement learning environment, we consider only two agents, each with one constraint, i.e., $m_1=1,m_2=1$. Using the abovementioned equations, we can derive the update rules for the actors' parameters and Lagrange multipliers.
\begin{subequations}
    \begin{align} 
       & \theta_1 \gets \theta_1 + \alpha_1 \bigtriangledown_{\theta_1}L_1(\theta_1,\lambda_1), \label{p22a}\\
       & \lambda_1 \gets \lambda_1 - \beta_1 \bigtriangledown_{\lambda_1}L_1(\theta_1,\lambda_1), \label{p22b}
       \\ 
       & \theta_2 \gets \theta_2 + \alpha_2 \bigtriangledown_{\theta_2}L_2(\theta_2,\lambda_2), \label{p22c}\\
       & \lambda_2 \gets \lambda_2 - \beta_2 \bigtriangledown_{\lambda_2}L_2(\theta_2,\lambda_2). \label{p22d}
    \end{align} 
\end{subequations}
where $\alpha_i$ and $\beta_i$ represent the learning rates for $\theta_i$ and $\lambda_i$, respectively. The derivatives in (\ref{p22a},\ref{p22b},\ref{p22c},\ref{p22d}) with respect to $\theta_i$ and $\lambda_i$ can be calculated using a set of trajectory samples $B$ randomly selected from the replay buffer, and these derivatives can be computed through the application of the chain rule.
\\
\textbf{Leader:}
% \mathleft
\begin{equation}
    \begin{aligned} \label{p23}        \bigtriangledown_{\theta_1}L_1(\theta_1,\lambda_1) = & \mathbb{E}_{\tau \sim B} \bigtriangledown_{\theta_1}\mu_1(o_1)(\bigtriangledown_{a_1}Q_1(s,a_1,a_2)+
    \\
    & \bigtriangledown_{a_1}\mu_2(o_2,a_1)\bigtriangledown_{a_2}Q_1(s,a_1,a_2) - \\ & \sum_{j=0}^{m_1}\lambda_{1,j}\bigtriangledown_{a_1}G_{1,j}(s,a_1,a_2)-\\
    &\bigtriangledown_{a_1}\mu_2(o_2,a_1)\bigtriangledown_{a_2}G_{1,j}(s,a_1,a_2)). 
    \end{aligned}
\end{equation}

\begin{equation}\label{p24}
    \bigtriangledown_{\lambda_1}L_1(\theta_1,\lambda_1) = -\mathbb{E}_{\tau
    \sim B} \sum_{j=0}^{m_1}\lambda_{1,j}(G_{1,j}(s, a_1, a_2)) - d_{1,j}),
\end{equation}
\\
\textbf{Follower:}
\begin{equation}
    \begin{aligned}\label{p25}
        \bigtriangledown_{\theta_2}L_2(\theta_2,\lambda_2) = \mathbb{E}_{\tau
        \sim B}\bigtriangledown_{\theta_2}\mu_2(o_2,a_1)(\bigtriangledown_{a_2}Q_2(s,a_1,a_2) - 
        \\
        \sum_{j=0}^{m_2}\lambda_{2,j}\bigtriangledown_{a_2}G_{2,j}(s,a_1,a_2)),
    \end{aligned}
\end{equation}

\begin{equation}
    \begin{aligned}\label{p26}
        \bigtriangledown_{\lambda_2}L_2(\theta_2,\lambda_2) = -\mathbb{E}_{\tau
        \sim B} \sum_{j=0}^{m_2}\lambda_{2,j}(G_{2,j}(s, a_1, a_2)) - d_{2,j}).
    \end{aligned}
\end{equation}

\subsubsection{Critic and Cost Critic update}
The centralized value function and cost value function $Q_i$ and $G_{i,j}$ are updated using off-policy temporal difference learning, as defined in MADDPG. The update rule closely resembles that of MADDPG. However, a hierarchical decision-making process determines the agents' target actions (\ref{hierarchical decision-making}). These networks are updated by minimizing the temporal difference error $L$ via gradient descent.
\\
\textbf{Critic:}
\begin{equation}
    \begin{aligned}
        L(\phi_i) = &\mathbb{E}_{\tau \sim B}(Q_i(s,a_1,a_2) - y)^2,
        \\
        y = &r_i + (1-d)\gamma Q_i^{targ}(s',a_1',a_2'),
        \\
        a_1' = &\mu_1^{targ}(o_1'),
        \\
        a_2' = &\mu_2^{targ}(o_2',a_1').
    \end{aligned}
\end{equation}
\textbf{Cost Critic:}
\begin{equation}
    \begin{aligned}
        L(\zeta_{i,j}) = &\mathbb{E}_{\tau \sim B}(G_{i,j}(s,a_1,a_2) - y)^2,
        \\
        y = &c_i + (1-d)\gamma G_{i,j}^{targ}(s',a_1',a_2'),
        \\
        a_1' = &\mu_1^{targ}(o_1'), 
        \\
        a_2' = &\mu_2^{targ}(o_2',a_1'). 
    \end{aligned}
\end{equation}
Where $\mu_i^{targ}$ is the target policy with delayed parameters $\theta_i^{targ}$, and $Q_i^{targ}$, $G_{i,j}^{targ}$ are the target critics, and target cost critics with delayed parameters $\phi_i^{targ}$ and $\zeta_{i,j}^{targ}$. These target networks are periodically updated with the latest parameters, contributing to the stabilization of the learning process.

\begin{algorithm}[htbp]
    \caption{Constrained Stackelberg MADDPG (CS-MADDPG)}
    \begin{algorithmic}
        \STATE \textbf{Initialization} 
        \STATE Initialize policy parameters $\theta_i$, Q-function parameters $\phi_i$, cost value function parameters $\zeta_i$ and empty replay buffer $D$;
        \STATE Initialize target networks parameters, and they share the same parameters as main networks $\theta_i^{targ} = \theta_i$, $\phi_i^{targ} = \phi_i$, $\zeta_i^{targ} = \zeta_i$, $i\in \left\{1,2\right\}$;
        \STATE Initialize hyperparameters $d_i$, $\gamma$, $\alpha_i$, $\beta_i$ and $\rho$ , $i\in \left\{1,2\right\}$;
        \FOR{step=1:t}
            \STATE Observe state s  and select action:
            \STATE $ \quad a_1 = clip(\mu_1(s) + \epsilon, a_{low}, a_{high})$,
            \STATE $ \quad a_2 = clip(\mu_2(s, a_1) + \epsilon, a_{low}, a_{high})$, where $\epsilon \sim \mathbb{N}$.
            \STATE Execute $(a_1,a_2)$ in the environment;
            \STATE Observe next state $s'$, reward $r_i$, cost $c_i$, and done signal $d$ which indicates whether $s'$ is terminal;
            \STATE Store trajectory $(s,a_i,r_i,c_i,d,s')$ into replay buffer $D$;
            \STATE If $s'$ is terminal, reset environment state;
            \FOR{k=1:n}
                \STATE Randomly sample a batch of transitions, $B = \left\{(s, a_i,  r_i , c_i, d,  s')\right\}$ from $D$;
                \STATE Compute target actions:
                \STATE $a'_1 = \mu_1(s)$, 
                \STATE $a'_2 = \mu_2(s,a'_1)$.
                \STATE Compute target Q:
                \STATE $y_{i} = r_{i} + \gamma(1-d)Q^{targ}_{i}(s', a'_1, a'_2)$.
                \STATE Compute target cost:
                \STATE $g_{i} = c_{i} + \gamma(1-d)G^{targ}_{i}(s', a'_1, a'_2)$.
                \STATE Update critic function by gradient descent:
                \STATE $\nabla_{\phi}\frac{1}{|B|}\sum_{\left\{(s ,a ,r ,c ,s', d) \in B\right\}}(Q_{i}(s, a_1, a_2) - y_{i})^2$.
                \STATE Update critic cost function by gradient descent:
                \STATE $\nabla_{\zeta}\frac{1}{|B|}\sum_{\left\{(s ,a ,r ,c ,s', d) \in B\right\}}(G_{i}(s, a_1, a_2) - g_{i})^2$.
                \STATE Update leader and follower's actor using policy gradient according to Equations (\ref{p23}) and (\ref{p25}):
                \STATE $\theta_i^{k+1} = \theta_i^k + \alpha_i \bigtriangledown_{\theta_i}L_i(\theta_i,\lambda_i)$.
                \STATE Update leader and follower's Lagrange multiplier according to Equations (\ref{p24}) and (\ref{p26}):
                \STATE $\lambda_i^{k+1} = \lambda_i^k  + \beta_i \bigtriangledown_{\lambda}L_i(\theta_i,\lambda_i)$.
                \STATE Update target network:
                \STATE $\theta_i^{targ} \gets \rho\theta_i^{targ} + (1-\rho)\theta_i$,
                \STATE $\phi_i^{targ} \gets \rho\phi_i^{targ} + (1-\rho)\phi_i$,
                \STATE $\zeta_i^{targ} \gets \rho\zeta_i^{targ} + (1-\rho)\zeta_i$.
            \ENDFOR
        \ENDFOR
    \end{algorithmic}
    \label{algo:CS-MADDPG-algorithm}
\end{algorithm}

\subsection{Expanding Bi-Agent Systems into Multi-Agent Systems}
In the previous setup, constrained by the bi-level framework, we only considered scenarios involving two agents. However, theoretically, this framework can be extended to multi-agent scenarios. There are two potential expansions: horizontal expansion and vertical expansion.
\subsubsection{Horizontal Expansion}
In horizontal expansion, the bi-level optimization framework can be maintained while accommodating multiple leaders and followers concurrently. A relationship of mutual constraints persists between leaders and followers. However, no restrictions are imposed between leaders and leaders or between followers and followers. Consequently, agents on the same level engage in a Nash equilibrium competition. The decision-making sequence of these $n$ agents can also be illustrated as follows:
\begin{subequations}
    \begin{align}
        & a^l_1 = \mu^l_1(o^l_1), \quad a^l_2 = \mu^l_2(o^l_2), \ldots, \quad a^l_n = \mu^l_n(o^l_n),
        \\
        & a^f_1 = \mu^f_1(o^f_1, a^l_1, a^l_2,...,a^l_n),\ldots, \quad a^f_n = \mu^f_n(o^f_n, a^l_1, a^l_2,...,a^l_n).
    \end{align}
\end{subequations}
The superscript $l$ here represents the leader agent, and $f$ represents the follower agent.

\subsubsection{Vertical Expansion}
In this type of expansion, one can endeavor to extend the bi-level problem to a multi-level problem. The distinction between leaders and followers is no longer absolute but relative. This entails a deepening of the bi-level optimization problem. In the context of reinforcement learning, the multi-level optimization problem is presented in the formula below:
\begin{equation}
    \begin{aligned}
        \mathop{\max}_{\pi_1} \quad & J_1(\pi_1, \pi_2^*,..., \pi_n^*) = \mathbb{E}_{a \sim \pi} \sum_{t=1}^{\infty} \gamma^t r_1^t, \\
         \textrm{s.t.}  \quad &  \mathop{\max}_{\pi_2} \quad J_2(\pi_1, \pi_2,..., \pi_n^*) = \mathbb{E}_{a \sim \pi} \sum_{t=1}^{\infty} \gamma^t r_2^t,
        \\
        & \textrm{...}
        \\
        & \textrm{s.t.} \quad \mathop{\max}_{\pi_n} \quad J_n(\pi_1, \pi_2,..., \pi_n) = \mathbb{E}_{a \sim \pi} \sum_{t=1}^{\infty} \gamma^t r_n^t.  
    \end{aligned}
\end{equation}
For the sake of brevity, only the unconstrained optimization problem is shown in the following formula. The decision-making sequence of these $n$ agents can also be demonstrated as follows:
\begin{subequations}
    \begin{align}
        & a_1 = \mu_1(o_1),
        \\
        & a_2 = \mu_2(o_2,a_1),
        \\
        & a_3 = \mu_3(o_3,a_1,a_2),
        \\
        & ...  \nonumber
        \\
        & a_n = \mu_n(o_3,a_1,a_2,...,a_{n-1}).
    \end{align}
\end{subequations}

\section{Experiment}
\label{sec:experiment}
This section presents the experimental evaluation of our algorithms, CSQ and CS-MADDPG, in the safe MARL autonomous driving benchmarks. These benchmarks have been developed based on the highway-env environments~\cite{highway-env},  which uses the Gym toolkit developed by OpenAI \cite{brockman2016openai} and incorporates vehicle dynamics through the Kinematic Bicycle Model \cite{vehicelemodel}. Our objective is to thoroughly assess the safety and reward performance of these algorithms across a wide range of autonomous driving scenarios, including merge, intersection, roundabout, and racetrack situations. The safe MARL benchmarks provide comprehensive simulation environments tailored explicitly for autonomous driving.

Our experiment is primarily divided into two parts. The first part encompasses scenarios (Merge and Roundabout environments) employing a discrete action space, featuring actions such as 'LANE-LEFT,' 'IDLE,' 'LANE-RIGHT,' 'FASTER,' and 'SLOWER.' The second part involves scenarios (Intersection and Racetrack environments) utilizing a continuous action space, comprising the throttle $\alpha$ and steering angle $\delta$. In the discrete action experiment, we compare our algorithm, CSQ, and the baseline algorithm, Bi-AC. In the continuous action experiment, our algorithm CS-MADDPG is assessed against three other algorithms: Bi-AC, MACPO, and MAPPO-L \cite{GU2023103905}.

In the experiments, each agent earns a reward of 2 for maintaining a speed within the range of $[8,12]$. An additional reward of 10 is granted to the first agent who crosses the finish line, while the second agent receives 5. In the event of a collision, the colliding vehicle incurs a penalty cost of 5.

\subsection{Safe Merge Environments}
The environment consists of two main roads and a side road merging into one of the main roads, as illustrated in Fig \ref{fig:merge}. The leader agent starts on a main road adjacent to the side road, heading towards the junction. In contrast, the follower agent initiates on the side road and must merge into the main roads, avoiding collision with the obstacle at the side road's end. A finish line is placed after the road junction. Both vehicles' initial positions and velocities are preset with random noise added, potentially leading to a collision without significant adjustments. The leader's goal is to maintain a high speed, creating space for the follower to merge safely. The follower aims to merge successfully without colliding with the leader or the roadblock. This task involves a general-sum, mixed cooperative-competitive game, requiring cooperation for safe merging and competition for bonus rewards by reaching the finish line first.

In Table \ref{tab:merge_reward_label}, all reward curves for the merge environment are depicted, including the leader agent's reward, the follower agent's reward, and the total reward. Remarkably, our CSQ algorithm outperforms Bi-AC regarding reward once convergence is reached. Additionally, the reward curve trained by CSQ is more stable after convergence, with smaller fluctuations. The training curve in Fig \ref{Fig8.5} depicts the probability of various events occurring during training. Because both algorithms use a bi-level framework, the leader has a clear advantage over the follower. In most cases, the leader will make space for the follower to merge while accelerating. However, CSQ guarantees 100\% safety after convergence, while Bi-AC still shows an approximate 10\% chance of collision. 

\begin{figure}[tb]
\centering
\includegraphics[scale=0.6]{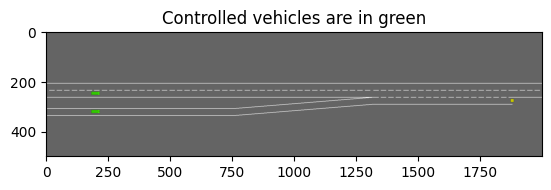}
\caption{Safe Merge Environments: The two vehicles need to merge into the main road safely.}
\label{fig:merge}
\end{figure}

\begin{figure}[tb]
\centering
\includegraphics[scale=0.6]{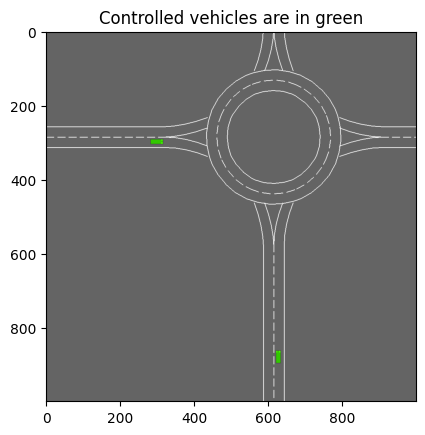}
\caption{Safe Roundabout Environments: 
Two vehicles need to merge safely into the roundabout and exit from the same exit.}
\label{fig:roundabout}
\end{figure}

\subsection{Safe Roundabout Environments}
This scenario depicts a circular intersection featuring a roundabout composed of two lanes, as illustrated in Fig \ref{fig:roundabout}. The roundabout has four entrances/exits in the ring's north, south, east, and west directions. In this setup, the leader agent vehicle originates from the west entrance, while the follower agent vehicle begins at the south entrance. These vehicles are on a collision course if left unattended due to their current speeds and lanes. Both vehicles autonomously follow their pre-planned routes, leading them towards the northern exit. However, they must also execute lane changes and longitudinal control to navigate the roundabout efficiently, minimizing the risk of collisions. This task embodies a general-sum and mixed cooperative-competitive game, where the two cars must collaborate to safely navigate the roundabout while seeking opportunities to prioritize their exits and attain higher rewards.

Table \ref{tab:roundabout_reward_label} and Fig \ref{Fig8.6} display the roundabout environment's reward curves and training progress. The training outcomes closely resemble those of the merge environment, showcasing that our CSQ algorithm consistently attains superior rewards and safety rates compared to Bi-AC. And CSQ also shows better stability after convergence. The experiment results in this environment show that the leader often accelerates to occupy the shorter inner lane, forcing the follower to drive on the outer lane. For a visualized illustration, please see our demos at this link: \url{https://github.com/SafeRL-Lab/Safe-MARL-in-Autonomous-Driving}.

\begin{figure}[htbp]
\centering
\includegraphics[scale=0.5]{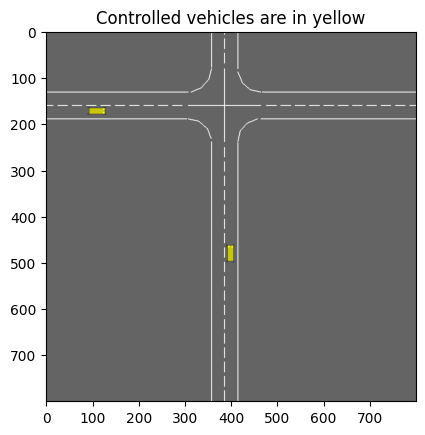}
\caption{Safe Intersection Environments: The two vehicles need to proceed straight and safely through the intersection.}
\label{fig:intersection}
\end{figure}

\begin{table*}[]
    \centering
    \begin{tabular}{ c  c  c }
    \hline \\
    Leader & Follower & Total \\
    \hline \\
    \includegraphics[width=0.3\textwidth,height=4cm]{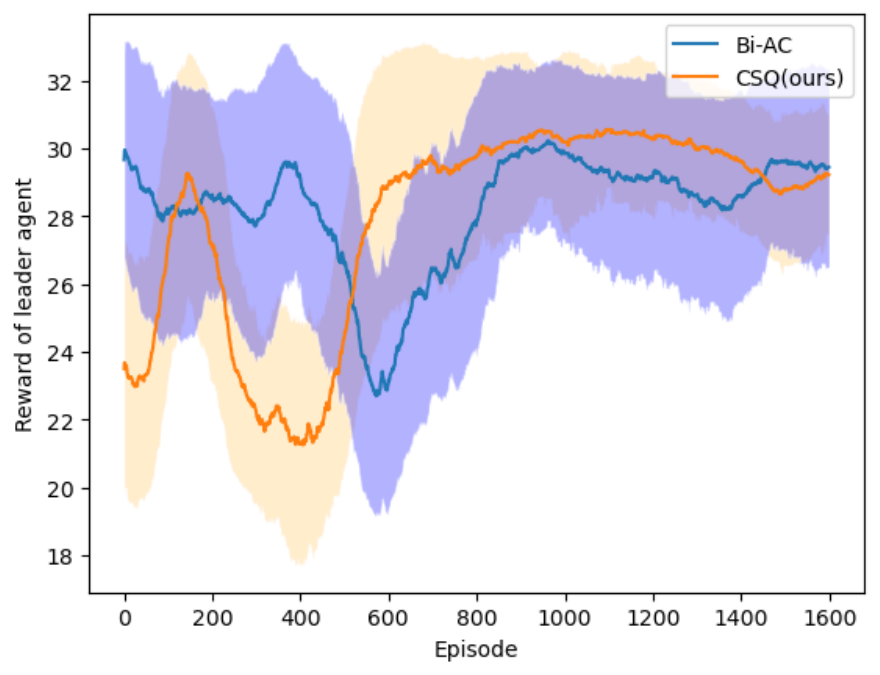} & 
    \includegraphics[width=0.3\textwidth,height=4cm]{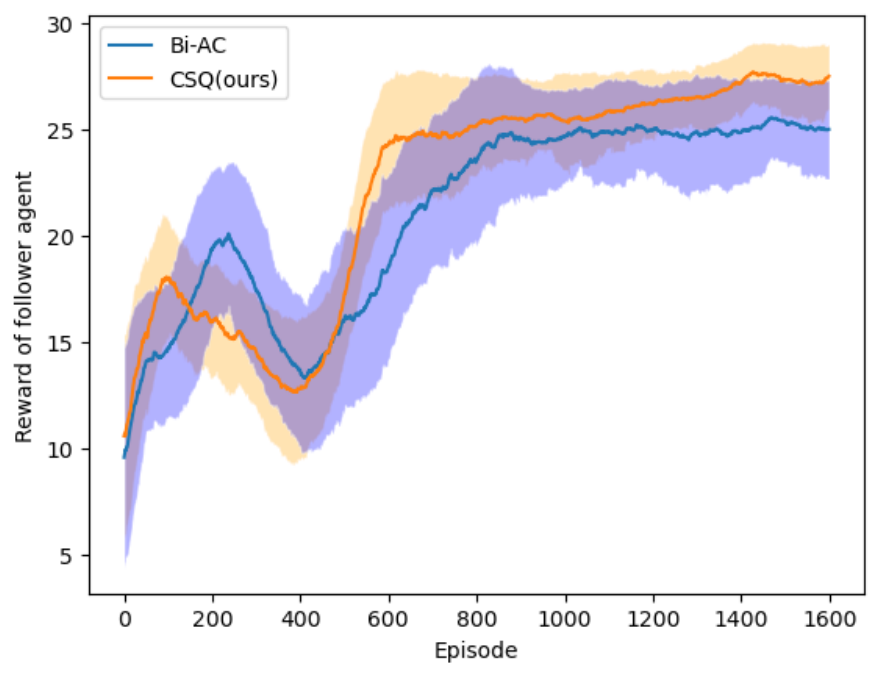} & 
    \includegraphics[width=0.3\textwidth,height=4cm]{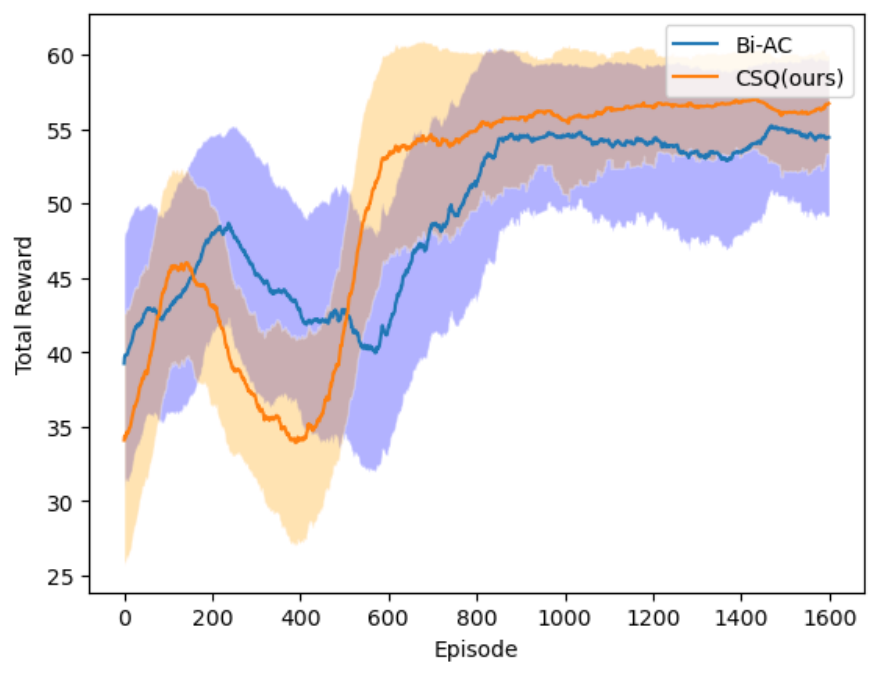}\\
    \hline
    \end{tabular}
    \caption{Table I: Reward curves for Safe Merge Environments.}
    \label{tab:merge_reward_label}
\end{table*}

\begin{table*}[]
    \centering
    \begin{tabular}{ c  c  c }
    \hline \\
    Leader & Follower & Total \\
    \hline \\
    \includegraphics[width=0.3\textwidth,height=4cm]{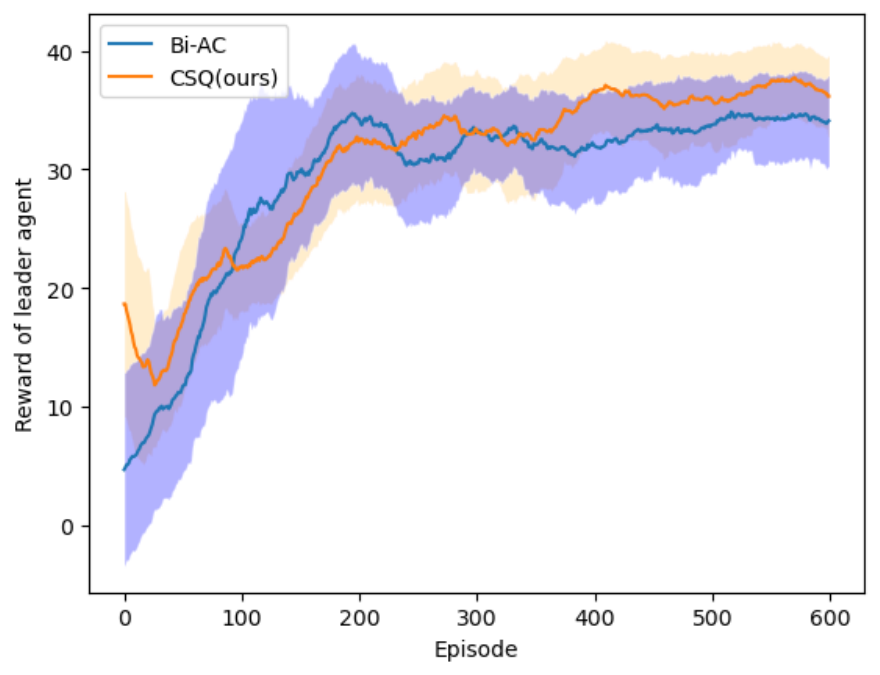} & 
    \includegraphics[width=0.3\textwidth,height=4cm]{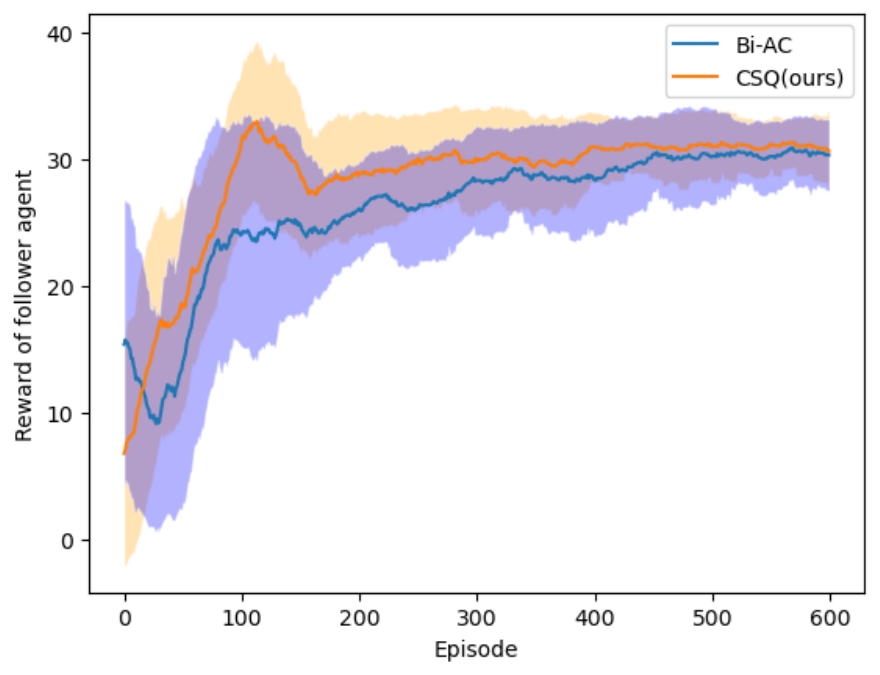} & 
    \includegraphics[width=0.3\textwidth,height=4cm]{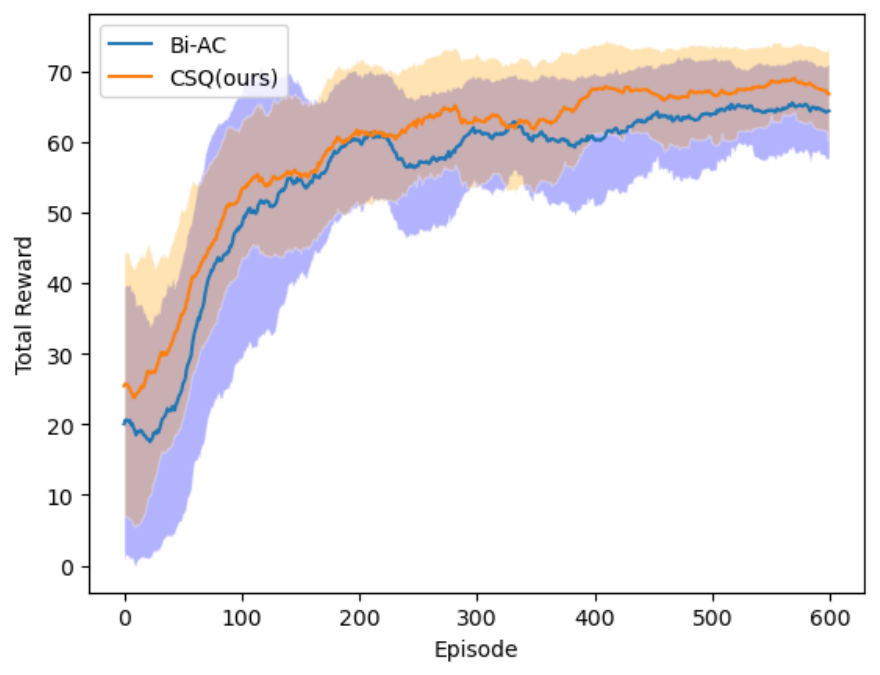} \\
    \hline
    \end{tabular}
    \caption{Table II: Reward curves of Safe Roundabout Environments.}
    \label{tab:roundabout_reward_label}
\end{table*}

\begin{figure*}[htbp]
\centering
\begin{minipage}{.5\textwidth}
  \centering
  \includegraphics[scale=0.25]{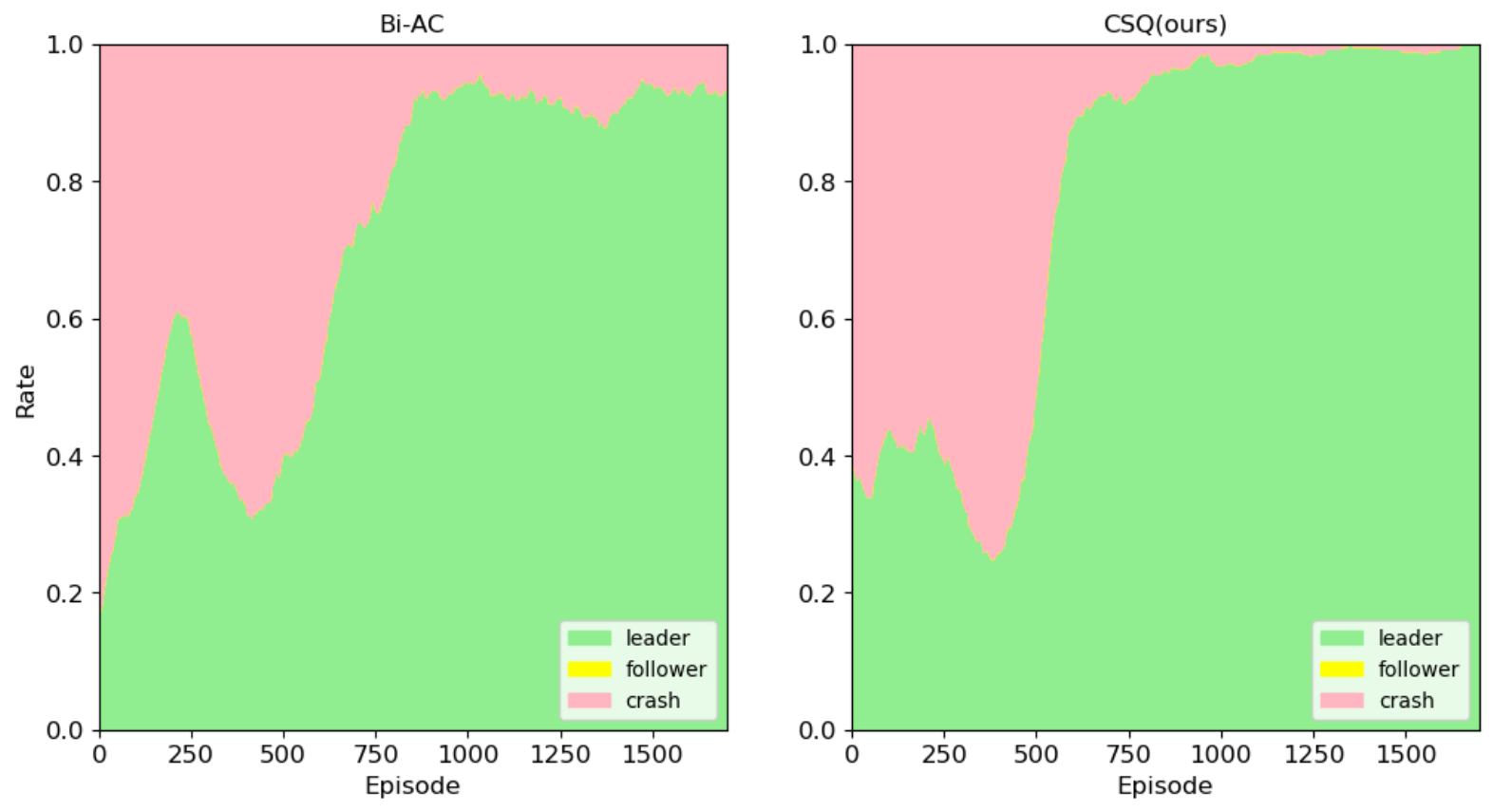}
  \caption{The training curves for Safe Merge Environments.}
  \label{Fig8.5}
\end{minipage}%
\begin{minipage}{.5\textwidth}
  \centering
  \includegraphics[scale=0.25]{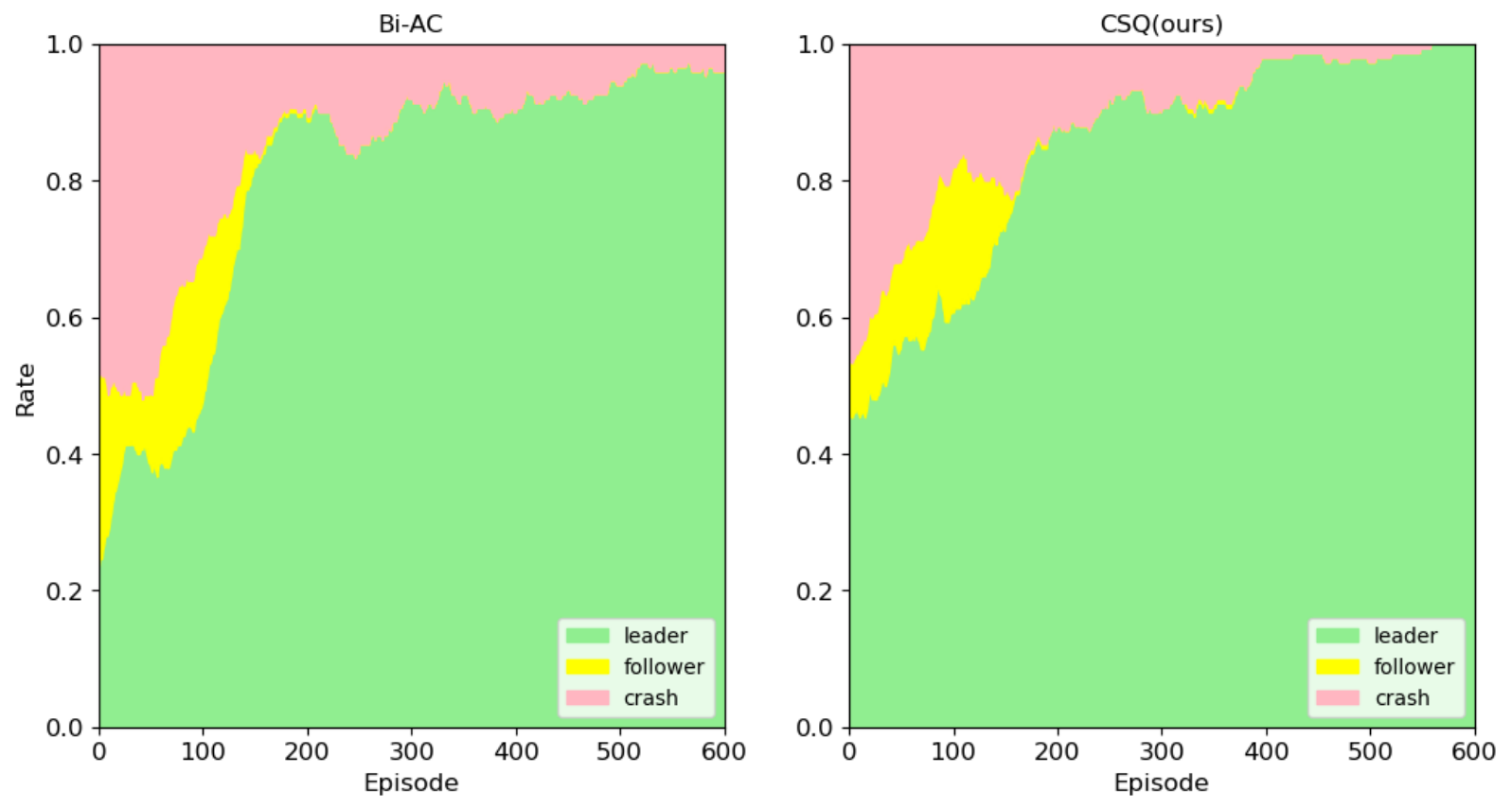}
  \caption{The training curves for Safe Roundabout Environments.}
  \label{Fig8.6}
\end{minipage}
\end{figure*}

\subsection{Safe Intersection Environments}
This scenario outlines an intersection where two roads meet at right angles, forming a four-way junction, as illustrated in Fig \ref{fig:intersection}. Vehicles approach from the north, south, east, and west, aiming to traverse the intersection and reach their destinations. Notably, this intersection lacks traffic lights, necessitating vehicles to navigate for safe passage autonomously. This experiment involves two agents: the leader, traveling from south to north, and the follower, moving from west to east. Each agent independently determines its actions and adjusts throttle variables based on observations. A bonus reward is provided for the first arrival to introduce a competitive element. However, it's crucial to emphasize that collision avoidance takes precedence, as any collisions incur a substantial penalty for both agents.

The training results are shown in Table \ref{tab:intersection_reward_label} and Fig \ref{Fig8.7}. While our CS-MADDPG algorithm achieves rewards very close to those of MACPO, it holds a slight advantage in terms of safety rates. All algorithms exhibit relatively stable rewards after convergence, but the CS-MADDPG algorithm converges faster and earlier. Under the CS-MADDPG framework, the relationship between the leader and the followers is relatively stable, with the leader predominantly taking the lead in most situations.

\begin{table*}[]
    \centering
    \begin{tabular}{ c  c  c }
    \hline \\
    Leader & Follower & Total \\
    \hline \\
    \includegraphics[width=0.3\textwidth,height=4cm]{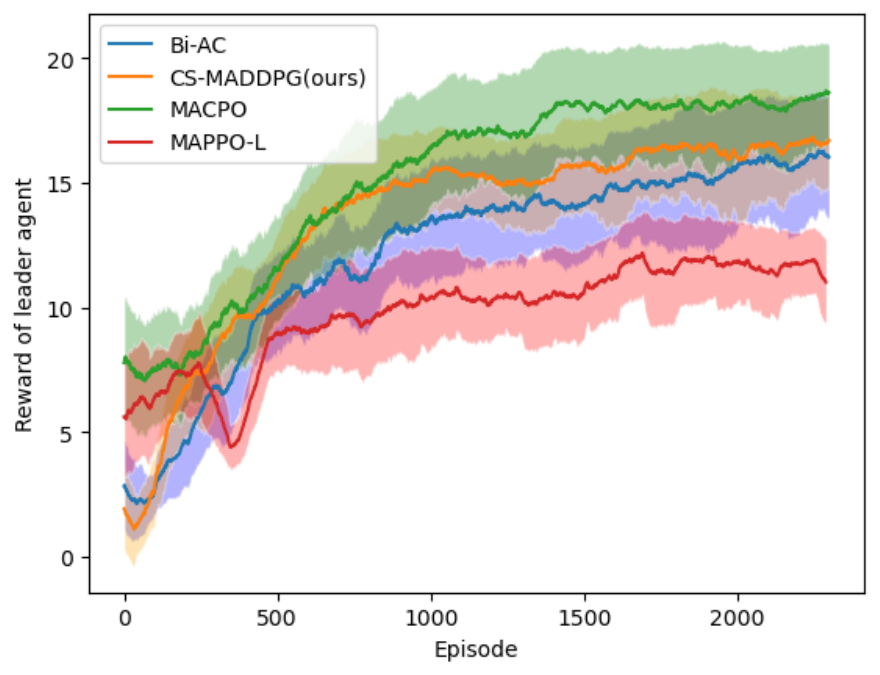} & 
    \includegraphics[width=0.3\textwidth,height=4cm]{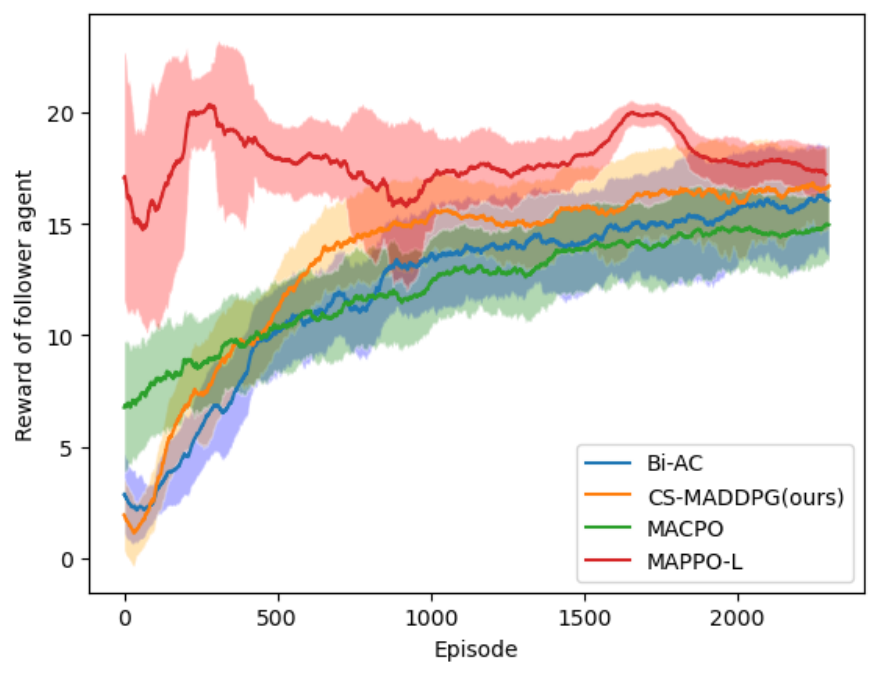} & 
    \includegraphics[width=0.3\textwidth,height=4cm]{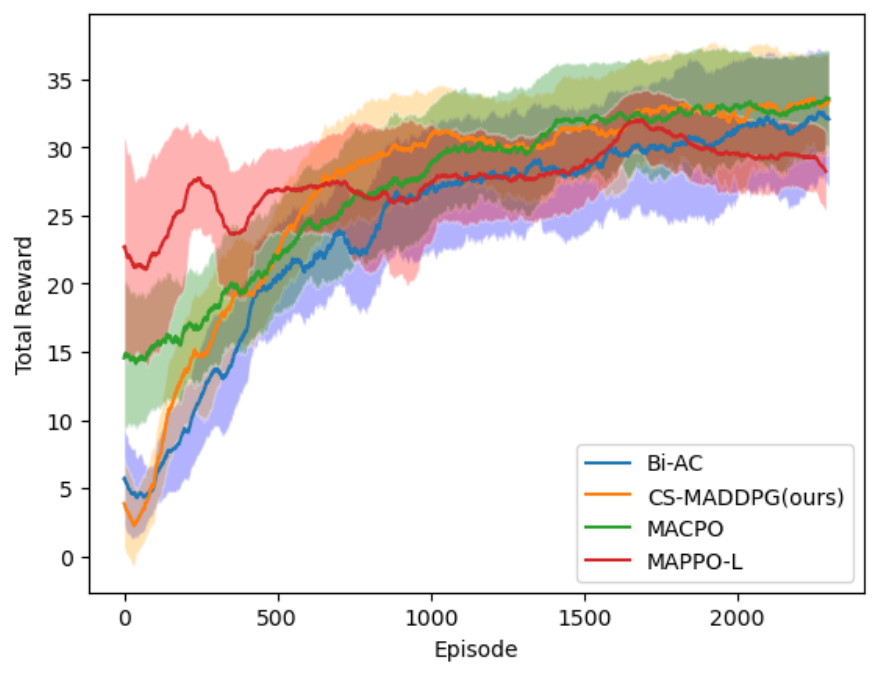} \\
    \hline
    \end{tabular}
    \caption{Table III: Reward curves of Safe Intersection Environments.}
    \label{tab:intersection_reward_label}
\end{table*}

\begin{table*}[]
    \centering
    \begin{tabular}{ c  c  c }
    \hline \\
    Leader & Follower & Total \\
    \hline \\
    \includegraphics[width=0.3\textwidth,height=4cm]{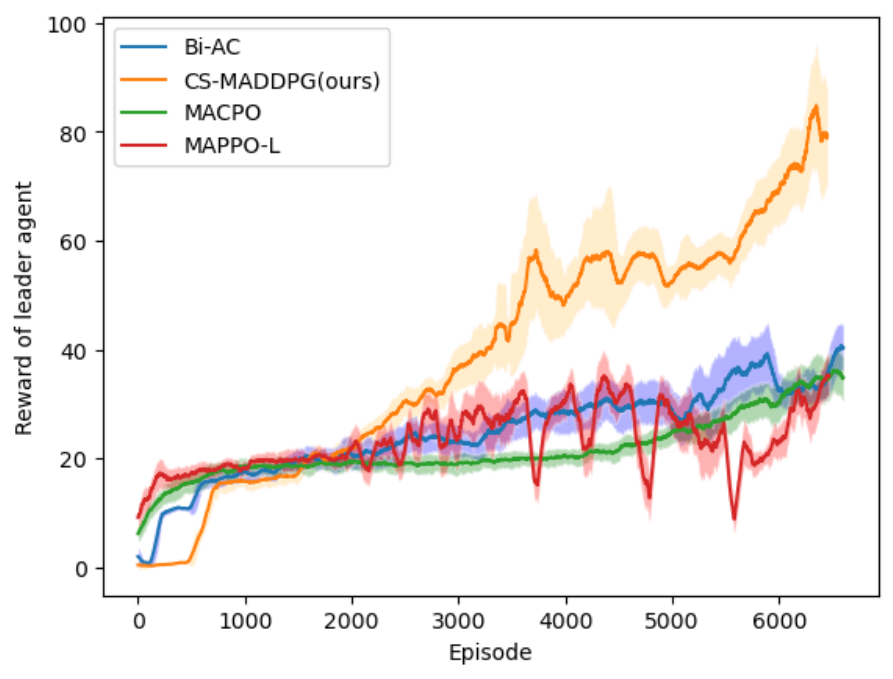} & 
    \includegraphics[width=0.3\textwidth,height=4cm]{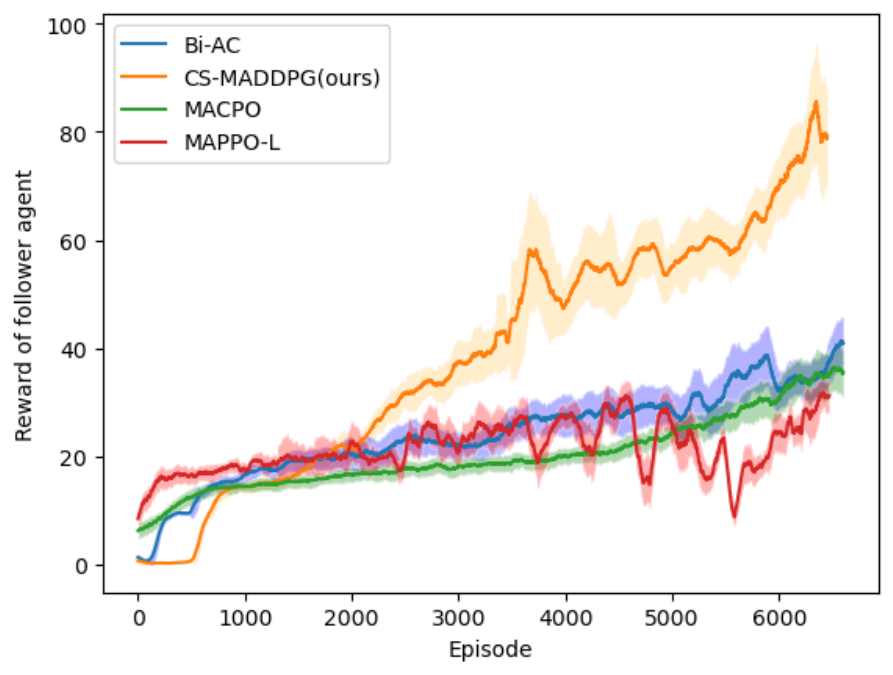} & 
    \includegraphics[width=0.3\textwidth,height=4cm]{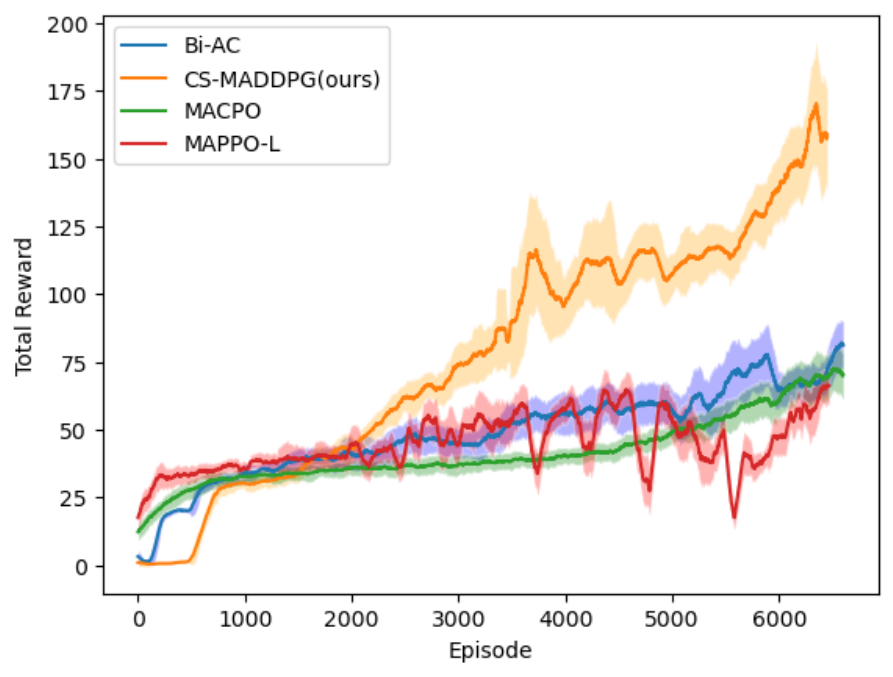} \\
    \hline
    \end{tabular}
    \caption{Table IV: Reward curves of Safe Racetrack Environments.}
    \label{tab:racetrack_reward_label}
\end{table*}

\begin{figure*}[htbp]
\centering
\begin{minipage}{.5\textwidth}
  \centering
  \includegraphics[scale=0.25]{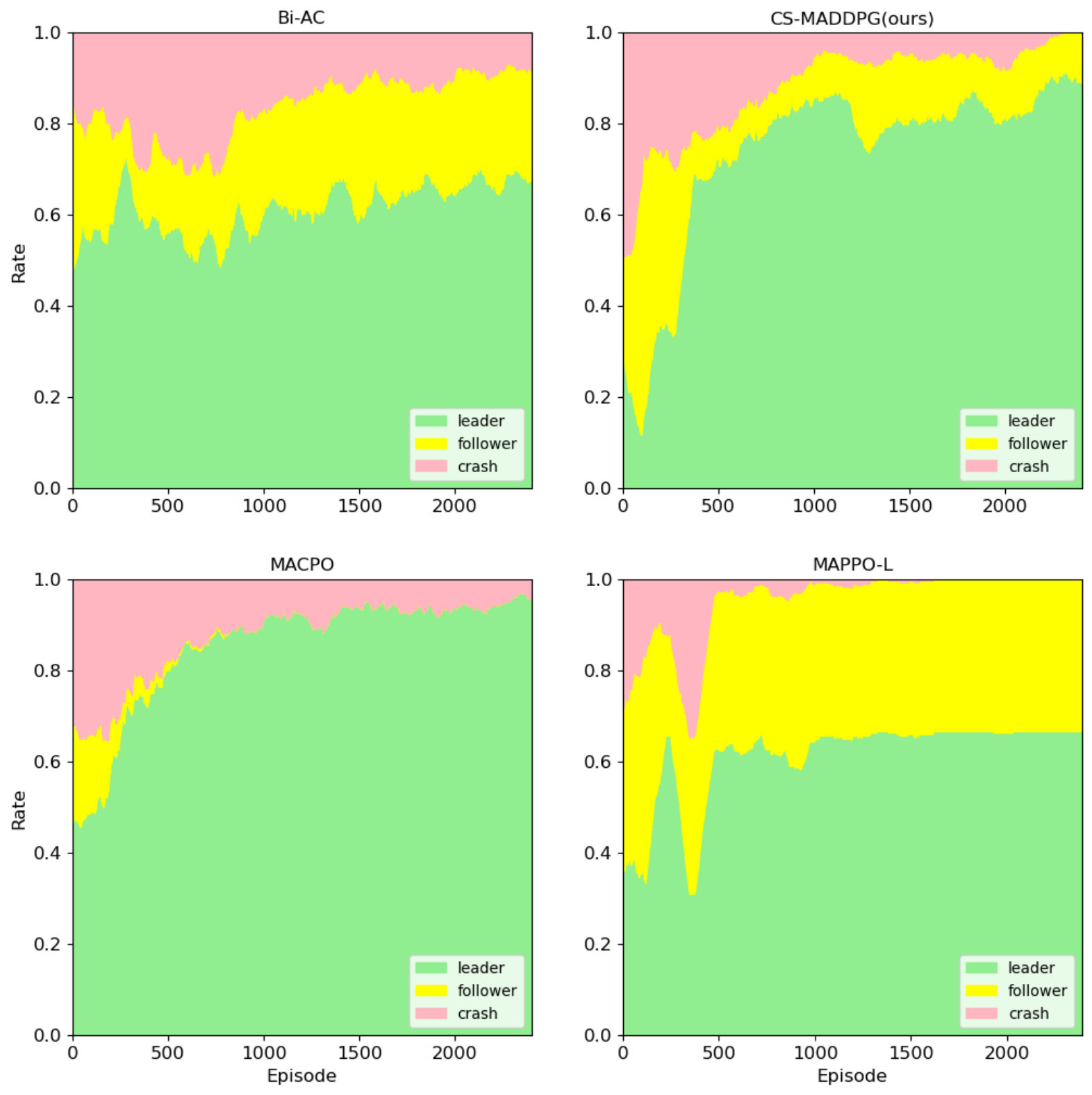}
  \caption{The training curves for Safe Intersection Environments.}
  \label{Fig8.7}
\end{minipage}%
\begin{minipage}{.5\textwidth}
  \centering
  \includegraphics[scale=0.25]{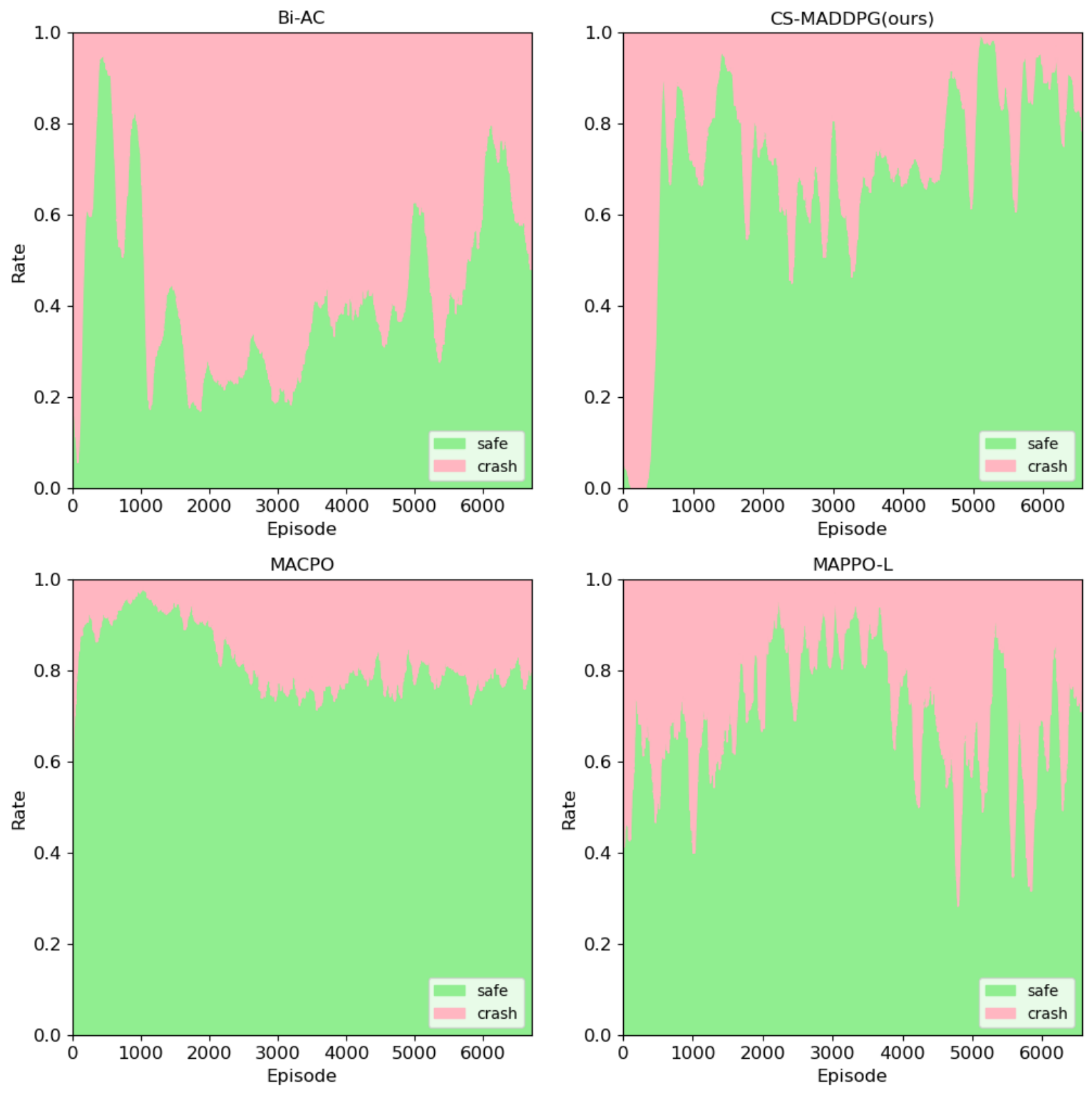}
  \caption{The training curves for Safe Racetrack Environments.}
  \label{Fig8.8}
\end{minipage}
\end{figure*}

\subsection{Safe Racetrack Environments}
The racetrack environment consists of a circular track with two lanes, as illustrated in Fig \ref{fig:racetrack}. It features straight sections and curves, rendering it more complex than previous environments. This setting involves continuous control, where agents are tasked with managing their steering angles and maintaining a constant longitudinal velocity. The leader and follower agent vehicles initiate parallel to each other at the starting point on the straight track. They aim to navigate the track while evading collisions with other vehicles and staying within the lane boundaries. Collisions become highly likely if the agents fail to control their steering angles properly, given their constant longitudinal velocity. The reward mechanism in this environment is slightly different from other environments. The vehicle earns the highest reward for staying in the center lane, and the rewards decrease as it deviates from the center line. Exiting the road results in no rewards, while collisions incur substantial penalties. The agents do not have a destination or finish line to reach but must safely travel along the lane throughout the simulation time. This setting embodies a cooperative game, necessitating the agents master the art of following the track and avoiding collisions to maximize rewards within the stipulated time frame.

The training results for the racetrack are presented in Table \ref{tab:racetrack_reward_label} and Fig \ref{Fig8.8}. Due to the complexity of the track, the baselines can not achieve a very high safety rate. Nevertheless, our algorithm demonstrates the capability to achieve rewards several times greater than those attained by other algorithms while covering longer distances. Because the follower can obtain action information from the leader and reacts more quickly to avoid collisions. 

% However, the convergence stability of CS-MADDPG is slightly inferior to MACPO, and there are fluctuations in the later stages of training.} More experiments are provided in Appendix~\ref{appendix-experiments:additional-experiments}.

\begin{figure}[htbp]
\centering
\includegraphics[scale=0.5]{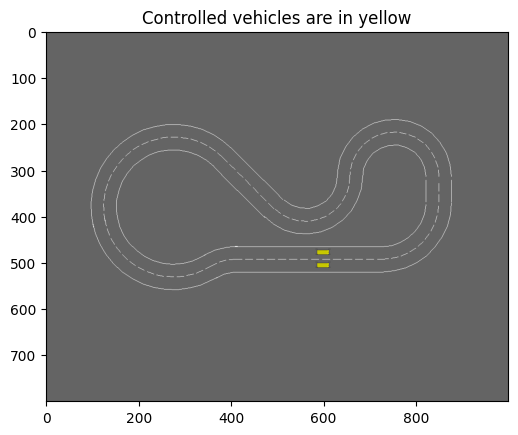}
\caption{Safe Racetrack Environments: The two vehicles need to drive safely on the track for as long as possible.}
\label{fig:racetrack}
\end{figure}

\section{Conclusion}
\label{eq:conclusion}
% 提出的方法，性能分析，优缺点，未来优化的方向

In this study, aiming to ensure the safety of MARL in autonomous driving, we propose a safe Stackelberg MARL method with bi-level optimization, where agents can reason about each other's behaviors when they make decisions. Derived from our method's theoretical analysis, CSQ and CS-MADDPG are developed for discrete and continuous settings. We examine the effectiveness of algorithms in our developed safe MARL autonomous driving benchmarks, and the experimental results suggest that our algorithms are better than several strong baselines, e.g., MACPO, MAPPO-L, and Bi-AC, regarding reward and safety performance. Additionally, the experimental data reveal that the leader reward, trained within the bi-level framework, typically surpasses the follower reward. This discrepancy underscores the leader's dominant position within the bi-level framework, leading to a reduction in the number of equilibrium points and enhancing their stability compared to the Nash equilibrium. This characteristic could contribute to the algorithm's effective convergence. In the future, we plan to provide more theoretical results, such as sample complexity.

% !TeX root = ../main.tex
% Add the above to each chapter to make compiling the PDF easier for some editors.
 
% \section{Convergence} 
% \label{Convergence}
% \appendices
\appendix
\section{Convergence analysis}
\label{Convergence}
In this section, we aim to examine how $Q_i^t$ for each agent i converges to a constant equilibrium $Q_i^*$. The constrained Stackelberg strategies establish the optimal value function $Q_i^*$. Our study focuses on the two-agent system: one leader agent and one follower agent. We strive to achieve the optimal value function $(Q_1^*, Q_2^*)$ as our learning objective and demonstrate the convergence of $(Q_1, Q_2)$ to $(Q_1^*, Q_2^*)$. Our convergence analysis is built on the proof of Nash-Q in \cite{nash_q} and Bi-level Actor-Critic in \cite{biac}.
\setcounter{equation}{0}
\renewcommand{\theequation}{A\arabic{equation}}
% \subsection{Convergence analysis of Constrained Stackelberg Q-learning algorithm}
\newtheorem{assumption}{Assumption}
\newtheorem{lemma}{Lemma}
\newtheorem{definition}{Definition}
\newtheorem{theorem}{Theorem}

\begin{assumption}
 Every state $s$ in the state space $S$ and every action $a_k$ in the action space $A_k$ for agents $k = 1,2$ are visited infinitely often~\cite{nash_q}.
\end{assumption}

\begin{assumption}
 The learning rate $\alpha^t$ is defined to meet the following conditions for all $s$, $t$, $a_1$, and $a_2$~\cite{nash_q}:
\\
1. $0 \leq \alpha^t(s,a_1,a_2)<1, \sum^{\infty}_{t=0}\alpha^t(s, a_1, a_2) = \infty, 
\\ \sum^{\infty}_{t=0}[\alpha^t(s, a_1, a_2)]^2 < \infty$, and the latter two hold uniformly and with probability one.
\\
2. $\alpha^t(s, a_1, a_2) = 0$, if $(s, a_1, a_2) \neq (s^t, a^t_1, a^t_2)$.
\end{assumption}

\begin{assumption}
For every state $s$ in the state space $S$, the safe set of action space $A_k^{safe}(s)$ is not empty for agents $k = 1,2$.
\end{assumption}

\begin{lemma}
(Szepesvari and Littman (1999), Corollary 5).
Assume that $\alpha^t$
satisfies Assumption 2, and the mapping $P^t: \mathbb{Q} \to \mathbb{Q}$ satisfies the following condition: there exists a number $0 < \gamma < 1$ and a sequence $\lambda^t \geq 0$ converging into zero with probability one such that $||P^tQ - P^tQ^*|| \leq \gamma||Q - Q^*|| + \lambda^t$ for all $Q \in \mathbb{Q}$ and $Q^* = E[P^tQ^*]$, then the iteration defined by $Q^{t+1} = (1 - \alpha)Q + \alpha^t[P^tQ^t]$ converges to $Q^*$ with probability one \cite{nash_q}.

% Note that $P_t$ in Lemma 1 is a pseudo-contraction operator because a “true” contraction operator should map every two points in the space closer to each other, which means $||P_tQ - P_t\hat{Q}|| \leq \gamma ||Q - \hat{Q}||$ for all $Q, \hat{Q} \in \mathbb{Q}$. Even when $\lambda_t = 0$, $P_t$ is still not a contraction operator because it only maps every $Q \in \mathbb{Q}$ closer to $Q^*$, not mapping any two points in $\mathbb{Q}$ closer.
\end{lemma}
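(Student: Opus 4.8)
The plan is to recognize this as a standard stochastic-approximation statement: $P^t$ is a pseudo-contraction toward $Q^*$ with modulus $\gamma < 1$ and a vanishing perturbation $\lambda^t$, and the update is a Robbins--Monro averaging recursion. Since this is precisely Corollary 5 of Szepesvari and Littman, the most direct route is to appeal to their asynchronous stochastic-approximation theorem, but I will reconstruct the argument. First I would introduce the error process $\Delta^t := Q^t - Q^*$ and rewrite the corrected iteration $Q^{t+1} = (1-\alpha^t)Q^t + \alpha^t P^t Q^t$ as $\Delta^{t+1} = (1-\alpha^t)\Delta^t + \alpha^t(P^tQ^t - Q^*)$, so that convergence of $Q^t$ to $Q^*$ is equivalent to showing $\Delta^t \to 0$ with probability one.

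Next I would decompose the increment $P^tQ^t - Q^*$ using the fixed-point identity $Q^* = E[P^tQ^*]$, writing $P^tQ^t - Q^* = (P^tQ^t - P^tQ^*) + (P^tQ^* - E[P^tQ^*])$. The first bracket is controlled by the hypothesis $||P^tQ^t - P^tQ^*|| \leq \gamma||\Delta^t|| + \lambda^t$, while the second bracket $w^t := P^tQ^* - E[P^tQ^*]$ is, by construction, a conditionally zero-mean noise term. The recursion then has the canonical form of a contracting stochastic iteration: a drift that shrinks $||\Delta^t||$ by a factor $\gamma$ (up to the vanishing bias $\lambda^t$) plus a martingale-difference perturbation $\alpha^t w^t$.

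With this structure in place I would invoke the general convergence theorem for such processes (in the style of Jaakkola, Jordan and Singh, and of Tsitsiklis): under Assumption 2 the step sizes obey $\sum_t \alpha^t = \infty$ and $\sum_t (\alpha^t)^2 < \infty$, which together with the bounded conditional variance of $w^t$ forces the accumulated noise to vanish, and the contraction modulus $\gamma < 1$ then squeezes $\Delta^t$ toward zero. The main obstacle is exactly this step: one must simultaneously control the accumulated martingale noise $\sum_s \alpha^s w^s$ and the time-varying bias $\lambda^t$ while the contraction acts. The standard device is a sandwich argument---first establish almost-sure boundedness of the iterates, then show the contraction drives $||\Delta^t||$ below any level set by the residual noise and by $\lambda^t$, and finally let both the noise sum and $\lambda^t$ tend to zero so that the limiting level is zero. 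Handling the perturbation $\lambda^t$ cleanly requires absorbing it into the contraction gap $1-\gamma$, which is where the assumption $\gamma < 1$ (rather than merely $\gamma \leq 1$) is essential.
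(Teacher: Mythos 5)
This lemma is never proved in the paper at all: it is imported verbatim (via the Nash-Q paper \cite{nash_q}) as Corollary 5 of Szepesvari and Littman (1999), so there is no in-paper proof to compare your attempt against. Your reconstruction is, in substance, the argument that lives in the cited source: setting $\Delta^t = Q^t - Q^*$, splitting $P^tQ^t - Q^* = (P^tQ^t - P^tQ^*) + (P^tQ^* - E[P^tQ^*])$, bounding the first bracket by the pseudo-contraction hypothesis $\gamma\|\Delta^t\| + \lambda^t$, treating the second bracket as conditionally zero-mean noise, and then running a contraction-plus-martingale sandwich argument under the Robbins--Monro step-size conditions of Assumption 2. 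Two caveats should be made explicit if this sketch were to stand as a proof. First, you silently use hypotheses that the paper's (loose) restatement of the corollary does not state: bounded conditional variance of the noise $w^t = P^tQ^* - E[P^tQ^*]$ and almost-sure boundedness of the iterates. Both hold in the Q-learning instantiation (bounded rewards and discounting) and are part of the original result's conditions, but neither follows from the lemma's stated hypotheses alone, so a self-contained proof must assume or establish them. Second, Assumption 2(2) makes the iteration asynchronous --- only the visited entry $(s^t, a_1^t, a_2^t)$ is updated at time $t$ --- so the convergence theorem you invoke must be the asynchronous version (Tsitsiklis; Jaakkola, Jordan and Singh; Szepesvari and Littman), which additionally leans on the infinite-visitation requirement of Assumption 1; a purely synchronous contraction-plus-noise argument does not cover the stated update. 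With those two points filled in, your outline is the correct and standard proof of this statement.
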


\begin{definition}
Let $Q = (Q_1, Q_2)$ where $Q_1 \in \mathbb{Q}_1, Q_2 \in \mathbb{Q}_2$, and $\mathbb{Q} = \mathbb{Q}_1 \times \mathbb{Q}_2$, $P^t:\mathbb{Q} \to \mathbb{Q}$ is a mapping on the complete metric space $\mathbb{Q}$ into $\mathbb{Q}$, $P^tQ = (P^tQ_1, P^tQ_2)$, where
\begin{equation}
    \begin{aligned}
        P^tQ_i(s,a_1,a_2) &= r^t_i(s,a_1,a_2) + \gamma Q_i^{CS}(s'),
        \\
        & for \quad i = 1,2.
    \end{aligned}
\end{equation}
where s' is the state at time t+1, and $Q_i^{CS}(s') = Q_i(s', \pi_1(s'), \pi_2(s'))$ is a constrained Stackelberg equilibrium solution for the game $(Q_1(s'), Q_2(s'))$ defined in Constrained Stackelberg Q-learning, as shown in Equations (\ref{p10}), (\ref{p11}), (\ref{p12}), and (\ref{p13}).
\end{definition}

\begin{lemma}
For an 2-player stochastic or deterministic game, $E[P^tQ^*] = Q^*$, where $Q^* = (Q_1^*,Q_2^*)$.
\end{lemma}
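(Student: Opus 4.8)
The plan is to show that the operator $P^t$ defined in Definition~1, when applied to the optimal value pair $Q^* = (Q_1^*, Q_2^*)$, has expectation equal to $Q^*$ itself, i.e. $Q^*$ is a fixed point of the averaged Bellman operator. The key observation is that $Q_i^*$ is \emph{defined} as the value function associated with the constrained Stackelberg equilibrium strategies $(\pi_1^*, \pi_2^*)$, so I must verify that the one-step Bellman backup using the equilibrium continuation value reproduces $Q_i^*$ exactly. First I would write out, from Equations~(\ref{q_value}) and (\ref{cost_value}), the defining self-consistency relation for the optimal value: $Q_i^*(s, a_1, a_2) = \mathbb{E}_{s' \sim P}\bigl(r_i(s, a_1, a_2) + \gamma Q_i^{*,CS}(s')\bigr)$, where the continuation term $Q_i^{*,CS}(s') = Q_i^*(s', \pi_1^*(s'), \pi_2^*(s'))$ is the equilibrium value in the next state as specified in Equations~(\ref{p12})--(\ref{p13}).

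Next I would compute $\mathbb{E}[P^t Q_i^*]$ directly from the definition of $P^t$. By Definition~1,
\begin{equation}
    P^t Q_i^*(s, a_1, a_2) = r_i^t(s, a_1, a_2) + \gamma Q_i^{*,CS}(s'),
\end{equation}
where $s'$ is the random next state at time $t+1$ and the superscript $CS$ denotes evaluation at the constrained Stackelberg equilibrium of the stage game $(Q_1^*(s'), Q_2^*(s'))$. Taking expectation over the reward realization and the transition $s' \sim P(\cdot \mid s, a_1, a_2)$ gives
\begin{equation}
    \mathbb{E}[P^t Q_i^*](s, a_1, a_2) = \mathbb{E}_{s' \sim P}\bigl(r_i(s, a_1, a_2) + \gamma Q_i^{*,CS}(s')\bigr),
\end{equation}
and the right-hand side is exactly the self-consistency relation I wrote in the first step. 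Hence $\mathbb{E}[P^t Q_i^*] = Q_i^*$ for each $i = 1, 2$, and stacking the two coordinates yields $\mathbb{E}[P^t Q^*] = Q^*$.

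The main obstacle, and the step requiring the most care, is justifying that the equilibrium continuation value $Q_i^{*,CS}(s')$ appearing in $P^t$ coincides with the continuation value used to define $Q_i^*$. This rests on the claim that the constrained Stackelberg equilibrium of the stage game induced by the optimal value functions agrees with the equilibrium strategies $(\pi_1^*, \pi_2^*)$ that generate $Q^*$; in other words, the fixed-point strategies are consistent between the definition of the value and the operator. I would argue this from the construction in Equation~(\ref{p14}): since $(\pi_1^*, \pi_2^*)$ are by definition the argmax solutions of the constrained leader--follower optimization applied to $(Q_1^*, Q_2^*)$, they are precisely the strategies realizing $Q_i^{*,CS}$, closing the circle. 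I would also note that this step implicitly requires the constrained Stackelberg equilibrium to be well-defined (nonempty feasible set), which is guaranteed by Assumption~3 ensuring $A_k^{safe}(s)$ is nonempty for every state, so the argmax problem in Equation~(\ref{p14}) always admits a feasible solution.
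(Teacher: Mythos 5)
Your proof is correct and follows essentially the same route as the paper's: both unwind the defining Bellman self-consistency relation for $Q_i^*$ (reward plus discounted equilibrium continuation value $Q_i^{*,CS}(s')$) and identify it with the expectation of $P^t Q_i^*$ over the transition distribution. Your additional remarks on strategy consistency via Equation~(\ref{p14}) and feasibility via Assumption~3 make explicit what the paper leaves implicit, but they do not change the argument.
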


\begin{proof}
    \begin{align*}
    &Q_i^*(s,a_1,a_2)
    =r_i(s,a_1,a_2) + \gamma \sum p(s'|s,a_1,a_2)Q_i^{*,{CS}}(s')
    \\&
    =r_i(s,a_1,a_2) + \gamma \sum p(s'|s,a_1,a_2)Q_i^*(s',\pi_1(s'), \pi_2(s'))
    \\&
    =\sum p(s'|s,a_1,a_2)(\gamma Q_i^*(s',\pi_1(s'),\pi_2(s')) + r_i(s,a_1,a_2))
    \\&
    =E[P_i^tQ_i^*(s, a1, a2)],
    \\&
    \Rightarrow E[P^tQ^*] = Q^*.
    \end{align*}
\end{proof}

\begin{assumption}
    For every t and s, $(Q^t_1(s), Q^t_2(s))$ has a global optimal point, which is the constrained Stackelberg equilibrium point, and agents’ payoffs at this optimal point are selected with 100\% certainty since all possible combinations of actions have been explored.
\end{assumption}
\hfill 
\begin{definition}
    The joint strategy $(\sigma_1,\sigma_2)$ is the global optimal point of constrained Stackelberg game $(Q_1(s), Q_2(s))$.
    \\
    \\
    For the sake of simplicity in notation, we will use $\sigma_i$ to denote $\pi_i(s)$ and $\hat{\sigma}_i$ to represent $\hat{\pi}_i(s)$.
    \\
    For agent 1 (leader agent), its payoff at the optimal point is higher than the payoff obtained by any other joint strategies:
    \\
    $\sigma_1\sigma_2Q^j_1(s) \geq 
    \hat{\sigma}_1 \hat{\sigma}_2Q^j_1(s)$ for all $\hat{\sigma}_1,\hat{\sigma}_2 \in \sigma(A_1^{safe}(s)),\sigma(A_2^{safe}(s))$.
    \\
    For agent 2 (follower agent), its payoff at the optimal point is higher than the payoff obtained by any other follower strategies:
    \\
    $\sigma_1\sigma_2Q^j_2(s) \geq \sigma_1\hat{\sigma}_2Q^j_2(s)$ for all $\hat{\sigma}_2 \in \sigma(A_2^{safe}(s))$.
    \\
\end{definition}

\begin{definition}
(Hu and Wellman (2003)) \cite{nash_q}) For $Q, \hat{Q} \in \mathbb{Q}$, define:
    \\
    \begin{equation}
        \begin{aligned}
            &||Q - \hat{Q}||
            \\
            & \equiv \max_{j} \max_{s}||Q_j(s) - \hat{Q}_j(s)||
            \\
            & \equiv \max_{j} \max_{s} \max_{a_1,a_2}||Q_j(s,a_1,a_2) - \hat{Q}_j(s,a_1,a_2)||.
        \end{aligned}
    \end{equation}
\end{definition}

\begin{lemma}
$||P^tQ - P^t\hat{Q}|| \leq \gamma  ||Q - \hat{Q}||$ for all $Q \in \mathbb{Q}$.
\end{lemma}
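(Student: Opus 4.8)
The plan is to exploit the structure of the operator $P^t$ from Definition~1 to cancel the common reward term and reduce the claim to a per-agent contraction on the constrained Stackelberg equilibrium values. Since $P^tQ_i(s,a_1,a_2)=r_i^t(s,a_1,a_2)+\gamma Q_i^{CS}(s')$, and the same identity holds for $\hat{Q}$ with the \emph{identical} reward $r_i^t(s,a_1,a_2)$, subtracting gives $P^tQ_i(s,a_1,a_2)-P^t\hat{Q}_i(s,a_1,a_2)=\gamma\bigl(Q_i^{CS}(s')-\hat{Q}_i^{CS}(s')\bigr)$. By the definition of the norm (Definition~3) it therefore suffices to establish $|Q_i^{CS}(s')-\hat{Q}_i^{CS}(s')|\le\|Q-\hat{Q}\|$ for each agent $i$ and each state $s'$; taking the maximum over $i$, $s'$ and $(a_1,a_2)$ and pulling out $\gamma$ then yields the lemma. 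Here $Q_i^{CS}(s')=\sigma_1\sigma_2 Q_i(s')$ and $\hat{Q}_i^{CS}(s')=\hat{\sigma}_1\hat{\sigma}_2\hat{Q}_i(s')$, where $(\sigma_1,\sigma_2)$ and $(\hat{\sigma}_1,\hat{\sigma}_2)$ are the constrained Stackelberg equilibria of the games $(Q_1(s'),Q_2(s'))$ and $(\hat{Q}_1(s'),\hat{Q}_2(s'))$, and $\sigma_1\sigma_2 Q_i(s')$ denotes the expectation of $Q_i(s',\cdot,\cdot)$ under the product of the equilibrium strategies.

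For the leader ($i=1$) I would use an add-and-subtract argument together with the global optimality recorded in Definition~2. Assume first $Q_1^{CS}(s')\ge\hat{Q}_1^{CS}(s')$. Because $(\hat{\sigma}_1,\hat{\sigma}_2)$ maximizes the leader payoff of the game $\hat{Q}$ over all feasible joint strategies, we have $\hat{\sigma}_1\hat{\sigma}_2\hat{Q}_1(s')\ge\sigma_1\sigma_2\hat{Q}_1(s')$, whence $Q_1^{CS}(s')-\hat{Q}_1^{CS}(s')=\sigma_1\sigma_2 Q_1(s')-\hat{\sigma}_1\hat{\sigma}_2\hat{Q}_1(s')\le\sigma_1\sigma_2\bigl(Q_1(s')-\hat{Q}_1(s')\bigr)$. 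As the right-hand side is an expectation of entrywise differences under the product distribution $\sigma_1\times\sigma_2$, it is at most $\max_{a_1,a_2}|Q_1(s',a_1,a_2)-\hat{Q}_1(s',a_1,a_2)|\le\|Q-\hat{Q}\|$. The opposite case $\hat{Q}_1^{CS}(s')>Q_1^{CS}(s')$ is symmetric, using instead the optimality of $(\sigma_1,\sigma_2)$ for $Q_1$, so $|Q_1^{CS}(s')-\hat{Q}_1^{CS}(s')|\le\|Q-\hat{Q}\|$.

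For the follower ($i=2$) the same two-sided scheme is the natural route and goes through verbatim \emph{if} one reads Assumption~4 and Definition~2 as endowing the follower with the analogous global optimality $\sigma_1\sigma_2 Q_2(s')\ge\hat{\sigma}_1\hat{\sigma}_2 Q_2(s')$ (the global-optimal-point case of Nash-Q). The delicate point, and what I expect to be the main obstacle, is that Definition~2 states the follower's optimality only \emph{conditionally} on a fixed leader strategy, i.e. $\sigma_1\sigma_2 Q_2(s')\ge\sigma_1\hat{\sigma}_2 Q_2(s')$, whereas the two equilibria generally use different leader strategies $\sigma_1\ne\hat{\sigma}_1$; the conditional inequality alone does not control $\sigma_1\sigma_2 Q_2(s')-\hat{\sigma}_1\hat{\sigma}_2 Q_2(s')$. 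To close this I would invoke Assumption~4, under which the equilibrium is a pure joint action selected with certainty, so each $\sigma_i,\hat{\sigma}_i$ is a point mass and the hierarchical best-response structure becomes explicit: writing the follower value as $Q_2(s',a_1,BR(a_1))$ with $BR$ the constrained follower best response, I would bound the gap by first fixing the leader action and applying the conditional follower optimality, then absorbing the change in the leader action through the leader-level estimate already proved. I would then verify that this chain of inequalities telescopes into a single entrywise gap bounded by $\|Q-\hat{Q}\|$. Combining the leader and follower bounds and reinstating the factor $\gamma$ completes the argument.
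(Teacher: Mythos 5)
Your reduction (cancelling the common reward term and pulling out $\gamma$) and your treatment of the leader are correct and essentially identical to the paper's proof: the paper also splits into the two cases $\sigma_1\sigma_2 Q_1(s) \gtrless \hat{\sigma}_1\hat{\sigma}_2\hat{Q}_1(s)$, uses the leader's \emph{global} optimality from Definition~2 applied to the game $\hat{Q}$ (resp.\ $Q$) to replace $\hat{\sigma}_1\hat{\sigma}_2\hat{Q}_1(s)$ by $\sigma_1\sigma_2\hat{Q}_1(s)$, and then bounds the resulting expectation of entrywise differences by $\|Q_1(s)-\hat{Q}_1(s)\|$.

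The follower case is where your proposal and the paper part ways, and your proposed repair has a genuine gap. You correctly isolate the crux: Definition~2 gives the follower only \emph{conditional} optimality (leader strategy held fixed), which by itself cannot compare $\sigma_1\sigma_2 Q_2(s)$ with $\hat{\sigma}_1\hat{\sigma}_2\hat{Q}_2(s)$ when $\sigma_1\neq\hat{\sigma}_1$. But your plan to ``absorb the change in the leader action through the leader-level estimate already proved'' cannot work: that estimate controls differences of $Q_1$-values only, and places no constraint on how the \emph{follower's} payoff varies when the leader's equilibrium action switches. Concretely, suppose the leader is exactly indifferent under $Q_1(s)$ between actions $a$ and $b$, with follower values $Q_2(s,a,\mathrm{BR}(a))=0$ and $Q_2(s,b,\mathrm{BR}(b))=K$ for large $K$; an arbitrarily small perturbation $\hat{Q}$ of $Q$ flips the leader's argmax from $a$ to $b$, so $|Q_2^{CS}(s)-\hat{Q}_2^{CS}(s)|\approx K$ while $\|Q-\hat{Q}\|$ is arbitrarily small. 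No telescoping of conditional-optimality and leader-level inequalities can defeat this example; what is missing is an assumption pinning down the equilibrium selection. The paper closes the hole differently: after the leader case it asserts that the leader contraction forces $Q_1$ to converge, hence the leader's strategy is identical under $Q$ and $\hat{Q}$ ($\sigma_1=\hat{\sigma}_1$ for all $Q,\hat{Q}\in\mathbb{Q}$), and with the leader strategy frozen the follower case follows from conditional optimality alone: $\sigma_1\sigma_2Q_2(s)-\hat{\sigma}_1\hat{\sigma}_2\hat{Q}_2(s)=\sigma_1\sigma_2Q_2(s)-\sigma_1\hat{\sigma}_2\hat{Q}_2(s)\leq\sigma_1\sigma_2Q_2(s)-\sigma_1\sigma_2\hat{Q}_2(s)\leq\|Q_2(s)-\hat{Q}_2(s)\|$. (Whether that identification of $\sigma_1$ with $\hat{\sigma}_1$ is itself airtight is debatable, but it is the step the paper relies on, and it is not the step you proposed.) Note that your alternative reading --- taking Assumption~4's ``global optimal point'' in the Hu--Wellman sense, so that the follower also enjoys global optimality $\sigma_1\sigma_2Q_2(s)\geq\hat{\sigma}_1\hat{\sigma}_2Q_2(s)$ --- would indeed make your two-sided argument go through verbatim; had you committed to that route instead of the best-response telescoping, your proof would be complete under that reading of the assumption.
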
 

\begin{proof}
    \begin{align*}
        &||P^tQ - P^t \hat{Q}|| 
         = \max_{j}||P^tQ_j - P^t \hat{Q}_j||
        \\
        & = \max_{j}\max_{s}|\gamma\pi_1(s)\pi_2(s)Q_j(s)-\gamma\hat{\pi}_1(s)\hat{\pi}_2(s)\hat{Q}_j(s)|
        \\
        & = \max_{j}\gamma|\pi_1(s)\pi_2(s)Q_j(s)-\hat{\pi}_1(s)\hat{\pi}_2(s)\hat{Q}_j(s)|.
    \end{align*}

    We proceed to prove that
    \begin{equation}
        |\pi_1(s)\pi_2(s)Q_j(s)-\hat{\pi}_1(s)\hat{\pi}_2(s)\hat{Q}_j(s)| \leq |Q_j(s) - \hat{Q}_j(s)|.
    \end{equation}
    
    After simplifying the notation:
    \begin{equation}
        |\sigma_1\sigma_2Q_j(s)-\hat{\sigma}_1\hat{\sigma}_2\hat{Q}_j(s)| \leq |Q_j(s) - \hat{Q}_j(s)|. 
    \end{equation}
    
    \textbf{1)Leader agent (Agent 1):}
    % \subsubsection{Leader agent (Agent 1)}
    
    if $\sigma_1\sigma_2Q_1(s) \geq \hat{\sigma}_1 \hat{\sigma}_2\hat{Q}_1(s)$, we have:
    
    \begin{equation}
        \begin{aligned}
            &\sigma_1\sigma_2Q_1(s) - \hat{\sigma}_1 \hat{\sigma}_2 \hat{Q}_1(s)
            \\
            \leq & \sigma_1\sigma_2Q_1(s) - \sigma_1 \sigma_2 \hat{Q}_1(s)
            \\
            = & \sum \sigma_1(a_1)\sigma_2(a_2) (Q_1(s,a_1,a_2)- \hat{Q}_1(s,a_1,a_2))
            \\
            \leq & \max_{s} \sum \sigma_1(a_1)\sigma_2(a_2) (Q_1(s,a_1,a_2)- \hat{Q}_1(s,a_1,a_2))
            \\
            = & \sum \sigma_1(a_1)\sigma_2(a_2) ||Q_1(s,a_1,a_2)- \hat{Q}_1(s,a_1,a_2))||
            \\
            = & ||Q_1(s)- \hat{Q}_1(s))||.
        \end{aligned}
    \end{equation}
    
    If $\sigma_1\sigma_2Q_1(s) \leq \hat{\sigma}_1 \hat{\sigma}_2\hat{Q}_1(s)$, the proof is similar to the above, thus
    
    \begin{equation}
        ||P^tQ_1 - P^t \hat{Q}_1|| \leq \gamma ||Q_1(s)- \hat{Q}_1(s))||.
    \end{equation}
    
    It means $Q_1$ will converge to a fixed point. Once $Q_1$ is converged, $\sigma_1$ will also be fixed for all states $s$, i.e., $\pi_1(s) = \hat{\pi}_1(s)$ ($\sigma_1 = \hat{\sigma}_1$) for all $Q, \hat{Q} \in \mathbb{Q}$.

    \textbf{2)Follower agent (Agent 2):}
    % \subsubsection{Follower agent (Agent 2)}
    if $\sigma_1\sigma_2Q_2(s) \geq \hat{\sigma}_1 \hat{\sigma}_2\hat{Q}_2(s)$.
    
    \begin{equation}
        \begin{aligned}
            & \sigma_1\sigma_2Q_2(s) - \hat{\sigma}_1 \hat{\sigma}_2 \hat{Q}_2(s)
            \\
            = & \sigma_1\sigma_2Q_2(s) - \sigma_1 \hat{\sigma}_2 \hat{Q}_2(s)
            \\
            \leq & \sigma_1\sigma_2Q_2(s) - \sigma_1 \sigma_2 \hat{Q}_2(s)
            \\
            = & \sum \sigma_1(a_1)\sigma_2(a_2) (Q_2(s,a_1,a_2)- \hat{Q}_2(s,a_1,a_2))
            \\
            \leq & \max_{s} \sum \sigma_1(a_1)\sigma_2(a_2) ||Q_2(s,a_1,a_2)- \hat{Q}_2(s,a_1,a_2))||
            \\
            = & \sum \sigma_1(a_1)\sigma_2(a_2) ||Q_2(s,a_1,a_2)- \hat{Q}_2(s,a_1,a_2))||
            \\
            = & ||Q_2(s)- \hat{Q}_2(s))||.
        \end{aligned}
    \end{equation}
    
    If $\sigma_1\sigma_2Q_2(s) \leq \hat{\sigma}_1 \hat{\sigma}_2\hat{Q}_2(s)$, the proof is similar to the above, thus
    
    \begin{equation}
        ||P^tQ_2 - P^t \hat{Q}_2|| \leq \gamma ||Q_2(s)- \hat{Q}_2(s))||.
    \end{equation}
\end{proof}

\begin{theorem}
    Under assumptions 1 - 3, the sequence $Q^t = (Q^t_1
    , Q^t_2)$, updated by

    \begin{equation}
    \begin{aligned} 
         & a_1' = \mathop{\arg\max}_{a_1} Q_1(s', a_1, a_2'),
        \\
        & \quad \textrm{s.t.} \quad G_{1}(s', a_1, a_2') \leq d_{1},
        \\
        & \qquad a_2' = \mathop{\arg\max}_{a_2} Q_2(s', a_1, a_2),
        \\
        & \qquad \quad \textrm{s.t.} \quad G_{2}(s', a_1, a_2) \leq d_{2}. 
    \end{aligned}
\end{equation}
\begin{subequations}
\begin{equation}
    \begin{aligned}
        & \phi_1 \gets \phi_1 - \alpha _1 \bigtriangledown_{\phi_1}(Q_1(s,a_1,a_2) - r_1
        \\
        & \qquad - \gamma (1-d)Q_1^{targ}(s',a_1',a_2')),
    \end{aligned}
\end{equation}
\begin{equation}
    \begin{aligned}
        & \phi_2 \gets \phi_2 - \alpha_2 \bigtriangledown_{\phi_2}(Q_2(s,a_1,a_2)  - r_2
        \\
        & \qquad - \gamma (1-d)Q_2^{targ}(s',a_1',a_2')),
    \end{aligned}
\end{equation}

\begin{equation}
    \begin{aligned}
        & \zeta_1 \gets \zeta_1 - \beta_1 \bigtriangledown_{\zeta_1}(G_{1}(s,a_1,a_2) - c_1
        \\
        & \qquad - \gamma (1-d)G_{1}^{targ}(s',a_1',a_2')),
    \end{aligned}
\end{equation}

\begin{equation}
    \begin{aligned}
        & \zeta_2 \gets \zeta_2 - \beta_2 \bigtriangledown_{\zeta_2}(G_{2}(s,a_1,a_2) - c_2 
        \\
        & \qquad- \gamma (1-d)G_{2}^{targ}(s',a_1',a_2')).
    \end{aligned}
\end{equation}

\end{subequations}
%     \begin{equation}
%     \begin{aligned}
%         & \phi_1 \gets \phi_1 - \alpha _1 \bigtriangledown_{\phi_1}(Q_1(s,a_1,a_2) - r_1
%         \\
%         & \qquad - \gamma (1-d)Q_1^{targ}(s',a_1',a_2')),
%         \\
%         & \phi_2 \gets \phi_2 - \alpha_2 \bigtriangledown_{\phi_2}(Q_2(s,a_1,a_2)  - r_2
%         \\
%         & \qquad - \gamma (1-d)Q_2^{targ}(s',a_1',a_2')),
%         \\
%         & \zeta_1 \gets \zeta_1 - \beta_1 \bigtriangledown_{\zeta_1}(G_{1}(s,a_1,a_2) - c_1
%         \\
%         & \qquad - \gamma (1-d)G_{1}^{targ}(s',a_1',a_2')),
%         \\
%         & \zeta_2 \gets \zeta_2 - \beta_2 \bigtriangledown_{\zeta_2}(G_{2}(s,a_1,a_2) - c_2 
%         \\
%         & \qquad- \gamma (1-d)G_{2}^{targ}(s',a_1',a_2')),
%     \end{aligned}
% \end{equation} 
in Constrained Stackelberg Q-learning leads to the convergence of the Q-values to a fixed value denoted as $Q^* = (Q_1^*, Q_2^*)$. This convergence is established through two key observations.

First, by applying Lemma 3, it is shown that the update operator $P^t$ is a contraction operator. This property ensures that the Q-values progressively approach a fixed point as the iterations proceed.

Second, the fixed point condition, represented by the equation $E[P^tQ^*] = Q^*$, is established using Lemma 2. This condition indicates that the expected value of applying the update operator to the fixed Q-value results in the same Q-value.

Combining these observations with the insights from Lemma 1, we conclude that the Q-values will converge to $Q^*$ with probability 1. In other words, as the algorithm continues to iterate, the Q-values approach the optimal solution, ensuring the convergence of the reinforcement learning process.
\end{theorem}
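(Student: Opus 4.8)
The plan is to recognize the four stochastic-gradient TD updates in the statement as a single stochastic-approximation recursion and then invoke the contraction-mapping machinery of Lemma 1. First I would argue that, in the idealized tabular regime where the critics $Q_i,G_i$ represent their targets exactly, the reward updates collapse to the iteration $Q_i^{t+1}(s,a_1,a_2) = (1-\alpha)Q_i^t(s,a_1,a_2) + \alpha^t\bigl(r_i^t + \gamma Q_i^{t,CS}(s')\bigr)$ of Equations (\ref{p10})--(\ref{p13}), which is exactly the form $Q^{t+1} = (1-\alpha)Q^t + \alpha^t[P^tQ^t]$ required by Lemma 1 with the operator $P^t$ of Definition 1. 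The whole theorem then reduces to checking the two hypotheses of Lemma 1: the fixed-point identity $Q^* = E[P^tQ^*]$ and the $\gamma$-contraction bound $||P^tQ - P^t\hat{Q}|| \le \gamma||Q-\hat{Q}|| + \lambda^t$ with $\lambda^t \to 0$ with probability one.

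The fixed-point identity is the easier half and is supplied by Lemma 2: unrolling the constrained Stackelberg--Bellman equations (\ref{q_value})--(\ref{cost_value}) and substituting the equilibrium value $Q_i^{*,CS}(s') = Q_i^*(s',\pi_1(s'),\pi_2(s'))$ shows that $Q_i^*$ reproduces itself in expectation over $s'\sim p(\cdot|s,a_1,a_2)$, giving $E[P^tQ^*]=Q^*$. I would take $\lambda^t\equiv 0$ in the idealized analysis, or let it absorb the vanishing function-approximation and target-lag error so that $\lambda^t\to 0$ under Assumption 2.

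The substantive step is the contraction bound of Lemma 3, and this is where I expect the real difficulty. Writing $\sigma_i$ and $\hat{\sigma}_i$ for the equilibrium strategies selected at state $s$ for the arguments $Q$ and $\hat{Q}$, the discounting pulls out a factor $\gamma$ and reduces everything to the non-expansion inequality $|\sigma_1\sigma_2 Q_j(s) - \hat{\sigma}_1\hat{\sigma}_2\hat{Q}_j(s)| \le ||Q_j(s)-\hat{Q}_j(s)||$. For the leader ($j=1$) I would assume without loss of generality $\sigma_1\sigma_2 Q_1(s)\ge\hat{\sigma}_1\hat{\sigma}_2\hat{Q}_1(s)$ and use the global optimality of $(\sigma_1,\sigma_2)$ from Definition 2 to replace $\hat{\sigma}_1\hat{\sigma}_2\hat{Q}_1$ by the (for $\hat{Q}$ suboptimal) choice $\sigma_1\sigma_2\hat{Q}_1$, after which the difference becomes a convex combination of pointwise gaps $Q_1(s,a_1,a_2)-\hat{Q}_1(s,a_1,a_2)$ and is therefore dominated by the sup-norm; the reverse inequality is symmetric. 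For the follower ($j=2$) the same scheme works with the leader strategy held fixed at $\sigma_1$, so I would bound $\hat{\sigma}_1\hat{\sigma}_2\hat{Q}_2$ below by $\sigma_1\hat{\sigma}_2\hat{Q}_2$ and then by $\sigma_1\sigma_2\hat{Q}_2$ using the follower's best-response optimality against $\sigma_1$.

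The crux — and the place where the constrained setting must earn its keep — is that the equilibrium-selection map has to be a non-expansion, which fails for arbitrary Stackelberg games; it holds here only because Assumptions 4--5 guarantee a \emph{global} constrained-Stackelberg optimum selected deterministically, with the safe action sets $A_k^{safe}(s)$ nonempty by Assumption 3 so the constrained $\arg\max$ in (\ref{p14}) is well defined. I would therefore spend the most care verifying (i) that the leader's equilibrium value is genuinely a global maximum over $\sigma(A_1^{safe}(s))\times\sigma(A_2^{safe}(s))$ and the follower's a best response over $\sigma(A_2^{safe}(s))$, so the payoff-swapping step is legitimate, and (ii) that once $Q_1$ converges its equilibrium strategy stabilizes, i.e.\ $\pi_1(s)=\hat{\pi}_1(s)$ for all $s$, which is what decouples the follower's contraction from the leader's. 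With the non-expansion in hand, Lemma 3 yields the $\gamma$-contraction, Lemma 2 yields the fixed point, and Lemma 1 together with Assumptions 1--2 (infinite visitation and the Robbins--Monro step-size conditions) delivers convergence of $(Q_1^t,Q_2^t)$ to $(Q_1^*,Q_2^*)$ with probability one.
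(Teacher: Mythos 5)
Your proposal follows essentially the same route as the paper's own argument: reduce the TD updates to the tabular recursion of Equations (\ref{p10})--(\ref{p13}), establish the fixed-point identity $E[P^tQ^*]=Q^*$ via Lemma 2, prove the $\gamma$-contraction of $P^t$ via the strategy-swapping argument of Lemma 3 (leader first, then follower decoupled by the stabilized leader strategy $\sigma_1=\hat{\sigma}_1$), and conclude with probability-one convergence from Lemma 1. Your reconstruction of Lemma 3's proof, including the use of global optimality from Definition 2 (which you mislabel as ``Assumptions 4--5'' --- the relevant items are Assumption 4 and Definition 2) and the identification of the leader-to-follower decoupling as the delicate step, matches the paper's reasoning.
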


\normalem 

\bibliography{main}
\bibliographystyle{plain}

\end{document}